
\documentclass[conference]{IEEEtran}

\usepackage{cite}
\usepackage[latin9]{inputenc}
\usepackage{amsthm}
\usepackage{amsmath}
\usepackage{amssymb}
\usepackage{amsfonts}
\usepackage{enumerate}
\usepackage{centernot}
\usepackage{multicol,comment}
\usepackage{xspace}
\usepackage{enumitem}
\usepackage{environ}
\usepackage{times}
\usepackage{float}
\usepackage{algorithmicx}
\usepackage{algpseudocode}
\usepackage{bm}
\usepackage{lscape}
\usepackage{pdflscape}
\usepackage{wrapfig}
\usepackage{rotating}
\usepackage{epstopdf}
\usepackage{setspace}
\usepackage{breqn}

\usepackage{epsfig}
\usepackage{xcolor}

\usepackage{algorithm}

\IEEEoverridecommandlockouts

\algnewcommand\algorithmicinput{\textbf{Input:}}
\algnewcommand\INPUT{\item[\algorithmicinput]}
\algnewcommand\algorithmicoutput{\textbf{Output:}}
\algnewcommand\OUTPUT{\item[\algorithmicoutput]}

\usepackage{graphicx,subcaption}

\usepackage[countmax]{subfloat}
\usepackage[unicode,pdfstartview=FitH]{hyperref}
\makeatletter

\newcommand*{\Resize}[2]{\resizebox{#1}{!}{$#2$}}%

\newcommand{\lyxmathsym}[1]{\ifmmode\begingroup\def\b@ld{bold}
	\text{\ifx\math@version\b@ld\bfseries\fi#1}\endgroup\else#1\fi}
\theoremstyle{plain}
\newtheorem{thm}{\protect\theoremname}

\newtheorem{lem}
{\protect\lemmaname}

{\protect\claimname}

\newtheorem{cor}
{\protect\corname}

\newtheorem{defin}
{\protect\defname}

\providecommand{\lemmaname}{Lemma}
\providecommand{\theoremname}{Theorem}
\providecommand{\claimname}{Claim}
\providecommand{\corname}{Corollary}
\providecommand{\defname}{Definition}


\makeatother

%

\begin{document}
\title{Learning Temporal Dependence from Time-Series Data with Latent Variables}


\author{
	\IEEEauthorblockN{Hossein Hosseini, Sreeram Kannan, Baosen Zhang and Radha Poovendran}
	Department of Electrical Engineering, University of Washington, Seattle, WA \\
	Email: \{hosseinh, ksreeram, zhangbao, rp3\}@uw.edu
	\thanks{This work was supported by ONR grants N00014-14-1-0029 and N00014-16-1-2710.}
}

\maketitle

\begin{abstract}
We consider the setting where a collection of time series, modeled as random processes, evolve in a causal manner, and one is interested in learning the graph governing the relationships of these processes. A special case of wide interest and applicability is the setting where the noise is Gaussian and relationships are Markov and linear. 

We study this setting with two additional features: firstly, each random process has a hidden (latent) state, which we use to model the internal memory possessed by the variables (similar to hidden Markov models). Secondly, each variable can depend on its latent memory state through a random lag (rather than a fixed lag), thus modeling memory recall with differing lags at distinct times. Under this setting, we develop an estimator and prove that under a genericity assumption, the parameters of the model can be learned consistently. We also propose a practical adaption of this estimator, which demonstrates significant performance gains in both synthetic and real-world datasets.

\end{abstract}

\begin{IEEEkeywords}
Time Series Analysis, Temporal Latent Variables, Dependency Structure.
\end{IEEEkeywords}

\section{Introduction}
As time series measurements become increasingly commonplace in many problems, developing algorithms that can learn the underlying structure and the relationships between the observed variables become necessary. An important class of such algorithms focuses on extracting the {\em linear} dependency of the observed variables; this line of work originated from the pioneering work \cite{granger1969investigating} proposing Granger causality. The linear temporal models have been used in numerous fields such as financial forecasting~\cite{cochrane2005time}, biological network modeling~\cite{lawrence2010learning}, and traditional control systems~\cite{young2012recursive}, because they are simple enough to {\em learn with limited data} and yet are effective in practice to model time series data. In these problems, the first step is to learn the model parameters, and then further questions about the system can be investigated, including prediction of future values, imputation of missing variables and causal inference.

\begin{figure}[!t]
	\centering
	\includegraphics[width=0.3\textwidth]{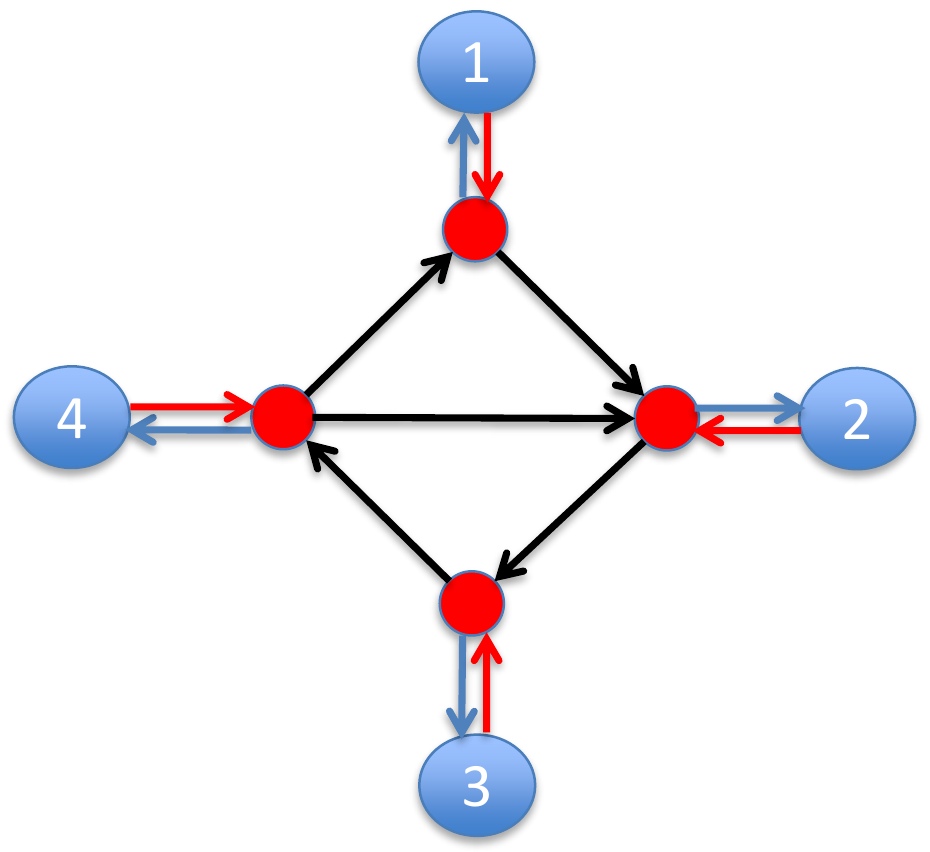}
	\caption{Example of Latent Temporal Model: The observed variables are shown in blue and the latent variables are shown in red. There is a sparse graph interconnecting the latent variables (edges shown in black). Also, each observed variable is influenced by its corresponding latent variable (edges shown in blue), and finally, each observed variable influences its latent variable (edges shown in red).}
	\label{fig:model}
\end{figure}

In many of the real-world datasets, some of the variables may be unobserved; most of the times, even the existence of such unobserved variables may be unknown. Therefore it is expedient to consider models which allow for some {\em hidden  or latent variables} and extract relationships not only between the observed variables but also between the latent variables. Inference in the presence of models with hidden variables has a long and distinguished history; a particular breakthrough is the work of \cite{dempster1977maximum} which proposed the Expectation-Maximization (EM) algorithm for maximizing the likelihood of observations in presence of latent variables. EM-based algorithms however do not guarantee convergence to the global optima of the likelihood landscape. 

In this paper, we unite the two threads by considering the learning of linear temporal relationships with latent variables. Our main contributions are the following.

\begin{enumerate}

\item We propose a new linear model for incorporating temporal latent variables, which captures the effects of the {\em temporal memory} in the system. Our proposed model has two important features:
\begin{itemize}
	\item For each observed variable, there is a latent component that acts as its memory,
	\item Each observed variable is affected by its memory with a random and time varying delay.
\end{itemize}

\item We provide the identifiability results for learning the system parameters using the observations.

\item We propose an efficient algorithm to learn the system parameters. We demonstrate that the proposed algorithm outperforms the baseline method for linear temporal models both in synthetic and real-world datasets.
\end{enumerate}

To illustrate the first aspect of our proposed model, consider an example where the variables are the disease states of various individuals over time, and we are interested in learning how the disease propagates through the network. In this context, it is unlikely that an individual who contacts a diseased individual immediately receives the disease; rather it may increase the {\em susceptibility} that can later manifest as the disease. Furthermore, susceptible individuals interact and influence each other during this incubation time period. In this case, susceptibility is an unobserved latent state, that can encode temporal memory inherent to the system. We refer the reader to Figure~\ref{fig:model} for an illustration of the model in a system with four observed time-series.

A second aspect of our model is that we allow the observed state to depend on a {\bf random lag} unobserved susceptibility, where the value of the random lag follows a certain distribution; this further enhances the temporal memory. The proposed model is also useful when the inter-sampling times are varying, un-precise, or unknown.

The rest of this paper is organized as follows. In Section II, we review the relevant literature. In Section III, we propose our model with latent variables and random lags. In Section IV, we derive linear equations, composed of covariances of observations and the model parameters. In Section V, we first present a theorem regarding the identifiability of the model parameters, and then propose an algorithm for learning the model parameters. Section VI provides the experimental results with synthetic data as well as real-world data, and Section VII concludes the paper.

\section{Related Work}


Sparse recovery from time series data have been developed through a long line of pioneering works (e.g.,~\cite{tibshirani1996regression,donoho2006compressed,wainwright2009sharp}), which are extensively used for Gaussian graphical model selection~\cite{meinshausen2006high,ravikumar2011high,friedman2008sparse,yuan2007model,banerjee2008model,danaher2014joint}.
A common theme of these works is that all variables involved in the model are observed.
%
%
In contrast, if there are \emph{latent variables} that cannot be directly observed, a naive model without considering them may result in a dense interaction graph with many spurious edges between the variables~\cite{loehlin1998latent}. The most common approach to this problem is developing methods based on the Expectation Maximization (EM)~\cite{dempster1977maximum, redner1984mixture}; however, because of non-convexity, EM suffers from the fact that it can get stuck in local optima and that it comes with weak theoretical guarantees.
In~\cite{chandrasekaran2010latent}, the authors proposed an alternative convex optimization approach to Gaussian graphical model selection with latent factors, assuming that i.i.d. samples are drawn from the distribution of observed variables, and that, compared to the number of observed variables, there are very few number of latent variables.

A somewhat parallel line of research on time series data is to identify \emph{causal relationships} between the variables. One of the earliest methods, also perhaps the most well-known, is the Granger causality~\cite{granger1969investigating}. Formally, $X$ Granger causes $Y$ if its past value can help to predict the future value of $Y$ beyond what could have been done with the past value of $Y$ only. Granger causality has been widely employed, due to its simplicity, robustness, and extendability, and many variants have been proposed for different application \cite{barrett2010multivariate,hiemstra1994testing,brovelli2004beta,marinazzo2008kernel,arnold2007temporal}. In~\cite{arnold2007temporal}, the authors applied Lasso regularization for graphical Granger model selection, and showed that it performs well in terms of accuracy and scalability. However, similar to the non-temporal models, when there are unseen/unknown time series, the simple temporal model can lead to wrong causal inference~\cite{spirtes2000causation}.  

In this paper, we study a model that combines both features of sparse latent variable recovery and temporal causality. Namely, we consider the problem of parameter estimation of a linear dynamical system with \emph{random delays} between the latent variables and observed variables. The proposed model is a generalization of the well-studied temporal Gaussian graphical model with hidden variables~\cite{ghahramani2001introduction}). In \cite{bengio1996input}, the state space model is generalized by including inputs. In~\cite{beal2005bayesian}, this model is revised in a sense that the previous observations are fed back into the model as inputs. In \cite{jalali2011learning}, the authors considered a first order vector autoregressive (VAR) model with hidden components, assuming that the number of hidden variables is {\em much fewer} than the number observations, the connections between observed components are sparse and each latent variable interacts with many observed components. The dependency structure is then learned by decomposing a matrix as a sum of low rank and sparse matrices.  These models however cannot handle our problem, as we consider a setting where the number of hidden variables is equal to the number of observed variables. 

Recently, in~\cite{geiger2015causal}, the authors considered a first order VAR model, which can capture number of latent variables up to the number of observed variables. Under this model, conditions such as non-Gaussianity and independent noise are utilized to learn the parameters; this is in spirit similar to Independent Component Analysis\cite{eriksson2004identifiability}. Our model is particularly tailored to the case when each observation has a corresponding latent component. The proposed algorithm for learning the model parameters utilizes the special structure of the model and does not need non-Gaussianity or independence to obtain consistent estimates. Furthermore, our model can incorporate more intricate memory due to random lags, whereas \cite{geiger2015causal} is limited to the first order Markov processes. We demonstrate the practical utility of our model by performing prediction on financial time series and climate datasets.

\section{Model and Problem Setup}

We consider a discrete time linear dynamical system where there are two types of variables: observed and latent. The unique feature of the proposed model is that the latent states influence the observed variables with \emph{random delays}. Let $z_t \in \mathbb{R}^p$ denote the vector of latent state variables and $x_t\in \mathbb{R}^p$ be the vector of observed variables. The system dynamics is described as
\begin{align}\label{Eq:Sys}
\begin{cases}
z_t = Az_{t-1} + Bx_{t-1} + v_t \\
x_t = Cz_{t-\Theta_t} + Dx_{t-1} + w_t.
\end{cases}
\end{align}
The random vectors $v_t \in \mathbb{R}^p$ and $w_t \in \mathbb{R}^p$ represent the state and observation noises, respectively, which are independent of each other and of the values of $z$ and $x$. Both of these noise sources are temporally white (uncorrelated from time step to time step) and spatially multivariate normally distributed with zero mean and covariance matrices denoted by $\Sigma_V$ and $\Sigma_W$, respectively.
The matrices $A$, $B$, $C$ and $D$ are of size $p \times p$.
The vector $\Theta_t\in \mathbb{Z}^p$ represents the delays incurred at the corresponding coordinates at time $t$. Each element of $\Theta_t$ is an integer-valued random variable, which is independent and identically distributed in $[0 \, , \, \theta_{\max}]$ according to some distribution, and is independent of everything else in the system. We denote the probability mass function of $\Theta$ by $q$, where $q_{\theta}=\Pr(\Theta=\theta)$.

Let $|y|$ denote the dimension of the vector $y$.
The model in \eqref{Eq:Sys} extends some existing models in literature. For example, by setting $\Theta_t=0$ and assuming $|z|\ll|x|$, the gene expression models introduced in \cite{rangel2005modeling,beal2005bayesian} can be recovered. The case where $\Theta_t=1$  and $|z|\ll|x|$ is considered by, for example, \cite{jalali2011learning}. In \cite{geiger2015causal}, a setting where $|z|\leq |x|$ and $\Theta=1$ is studied.
In this paper, we consider the setting where each observed variable has a corresponding latent state, i.e., $|z|=|x|$, and that each observed variable has a time-varying delay with respect to its latent state, implying that the matrix $C$ is diagonal (and invertible).

With $C$ being diagonal, without loss of generality, we can further restrict it to be the identity matrix. To see this, let $z'_t=Hz_t$, where $H$ is a nonsingular matrix. The model in~\eqref{Eq:Sys} can be written as:
\begin{align}\label{Eq:sysiden}
\begin{cases}
z'_t = HAH^{-1}z'_{t-1} + HBx_{t-1} + Hv_t \\
x_t = CH^{-1}z'_{t-\Theta_t} + Dx_{t-1} + w_t
\end{cases}
\end{align}
The matrix $H$ is required to be diagonal, because $CH^{-1}$ needs to be diagonal according to the model in~\eqref{Eq:Sys}. Note that the diagonality of $H$ enables us to write~\eqref{Eq:sysiden}, since we have $z'_{t-\Theta_t}=Hz_{t-\Theta_t}$. 
Moreover, similar to $v_t$, $Hv_t$ is multivariate normally distributed (although its covariance might be different, for which we do not have any requirement on). 
As a result, any coordinate transformation of latent variables as $z'_t=Hz_t$, with $H$ being a nonsingular diagonal matrix, will generate observations identically distributed with those of \eqref{Eq:Sys}.
Thus, without loss of generality, we can take $H$ to be $C^{-1}$, and the dynamical system to be
\begin{align}\label{Eq:Sys2}
\begin{cases}
z_t = Az_{t-1} + Bx_{t-1} + v_t \\
x_t = z_{t-\Theta_t} + Dx_{t-1} + w_t.
\end{cases}
\end{align}
\\

\section{System Equations}\label{apx:analysis}

In this section, we derive equations in terms of covariances of the observations and system parameters. 

For simplicity, we assume that the system is initialized at origin, i.e., $x_0$ and $z_0$ are zero vectors. Thus, according to the model in.~\eqref{Eq:Sys2}, $x_t$ and $z_t$ are multivariate normally distributed with zero mean. 
The covariance of $\{x_t\}$ is defined as
$\Sigma_{X_i}=\Sigma_{X_t,X_{t-i}} = \mathbb{E}[x_tx^T_{t-i}]- \mathbb{E}[x_t]\mathbb{E}[x_{t-i}]^T$, which reduces to $\Sigma_{X_i}=\mathbb{E}[x_tx^T_{t-i}]$, assuming that the variables are zero mean. 
Let $\Sigma_{ZX_i}=\Sigma_{Z_t,X_{t-i}} = \mathbb{E}[z_tx^T_{t-i}]$ denote the cross-covariances of $x_t$ and $z_t$. Note that due to stationarity, we have $\Sigma_{ZX_i}=\Sigma_{Z_{t-j},X_{t-i-j}}$ for any integer $j$.
For notational convenience, we write $\Sigma_{X}=\Sigma_{X_0}$ and $\Sigma_{ZX}=\Sigma_{{ZX}_0}$.
Let $\tau_1$ and $\tau_2$ be vectors. We define the $(i,j)$-th element of $\Sigma_{Z_{\tau_1}X_{\tau_2}}$ as
$\Sigma_{Z_{\tau_1}X_{\tau_2}}(i,j) = \mathbb{E}[z_{t-\tau_1(i)}(i)x^T_{t-\tau_2(j)}(j)]=\mathbb{E}[z_{t}(i)x^T_{t-\tau_2(j)+\tau_1(i)}(j)]$.

Since $\Theta_t$ is i.i.d., we drop the subscript of $\Theta_t$ when it does not lead to confusion.
Let us denote $\Sigma_{UY_{\Theta}}=\mathbb{E}_{t,\Theta}[u_ty_{t-\Theta}^T]$, where $u$ and $y$ are vectors of time series and $\Theta$ is as defined in Equ.~\ref{Eq:Sys}. Note that $\Sigma_{UY_{\Theta}}$ is not random.
We use the following two Lemmas for deriving the equations and in proofs.
\begin{lem}
	$\Sigma_{UY_{\Theta}}=\sum_{\theta=0}^{\theta_{\max}} {q_{\theta}\Sigma_{UY_{\theta}}}.$
\end{lem}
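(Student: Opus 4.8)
The plan is to prove the identity entry by entry and then assemble it into the matrix equation. Fixing indices $i$ and $j$, I would first unfold the definition to write the $(i,j)$ entry of the left-hand side as $\Sigma_{UY_{\Theta}}(i,j) = \mathbb{E}_{t,\Theta}[u_t(i)\,y_{t-\Theta(j)}(j)]$, where only the $j$-th coordinate $\Theta(j)$ of the random lag vector enters, since it is the coordinate of $y$ being shifted in that entry (consistent with the convention $\Sigma_{Z_{\tau_1}X_{\tau_2}}(i,j)=\mathbb{E}[z_{t-\tau_1(i)}(i)x^T_{t-\tau_2(j)}(j)]$ given above). The structural facts I will exploit are that $\Theta$ is independent of everything else in the system and that each of its coordinates is distributed according to the common pmf $q$.

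Next I would apply the law of total expectation (the tower property), conditioning on the scalar random variable $\Theta(j)$, which has finite support $\{0,\dots,\theta_{\max}\}$. This gives
\[
\Sigma_{UY_{\Theta}}(i,j) = \sum_{\theta=0}^{\theta_{\max}} \Pr\!\big(\Theta(j)=\theta\big)\; \mathbb{E}_t\!\left[u_t(i)\,y_{t-\theta}(j)\,\middle|\,\Theta(j)=\theta\right].
\]
Here I would invoke independence twice. First, independence of $\Theta$ from the processes $u$ and $y$ lets me drop the conditioning inside the expectation, so that $\mathbb{E}_t[u_t(i)\,y_{t-\theta}(j)\mid\Theta(j)=\theta]=\mathbb{E}_t[u_t(i)\,y_{t-\theta}(j)]=\Sigma_{UY_{\theta}}(i,j)$, the covariance at the deterministic lag $\theta$. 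Second, because every coordinate of $\Theta$ has pmf $q$, we have $\Pr(\Theta(j)=\theta)=q_{\theta}$, a weight that does not depend on $j$.

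Substituting these two simplifications yields $\Sigma_{UY_{\Theta}}(i,j) = \sum_{\theta=0}^{\theta_{\max}} q_{\theta}\,\Sigma_{UY_{\theta}}(i,j)$. Since this holds for every pair $(i,j)$ and the weights $q_{\theta}$ are the same across all entries, the entry-wise identity lifts at once to the matrix equation $\Sigma_{UY_{\Theta}} = \sum_{\theta=0}^{\theta_{\max}} q_{\theta}\,\Sigma_{UY_{\theta}}$.

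I do not anticipate a genuine obstacle, as the statement is essentially the law of total expectation applied to a discrete, independent lag. The only point requiring care is the bookkeeping for the vector-valued lag: one must check that when forming the $(i,j)$ entry, the shift is governed solely by the single coordinate $\Theta(j)$, so that the common marginal pmf $q$ — rather than the joint distribution of the whole vector $\Theta$ — produces scalar weights $q_{\theta}$ uniformly across all entries. Establishing that the same weights appear in every entry is precisely what allows the sum to be pulled outside the matrix.
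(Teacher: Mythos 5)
Your proposal is correct and follows essentially the same route as the paper's proof: an entry-wise application of the law of total expectation conditioning on $\Theta(j)$, using independence of $\Theta$ from the processes and the common marginal pmf $q$ to obtain the weights $q_\theta$. Your additional care about the vector-valued lag and why only the coordinate $\Theta(j)$ matters is a faithful elaboration of what the paper's computation implicitly assumes.
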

\begin{proof}
	\begin{align}
	\nonumber \Sigma_{UY_{\Theta}}(i,j)&=\mathbb{E}_{t,\Theta}[u_t(i)y_{t-\Theta(j)}^T(j)]\\
	\nonumber &=\mathbb{E}_{\Theta}[\mathbb{E}_t[u_t(i)y^{T}_{t-\Theta(j)}(j) | \Theta(j)=\theta]]\\
	\nonumber &=\mathbb{E}_{\Theta}[\Sigma_{UY_{\theta}}(i,j)]\\
	\nonumber &=\sum_{\theta=0}^{\theta_{\max}} {q_{\theta}\Sigma_{UY_{\theta}}(i,j)}.
	\end{align}
\end{proof}

Let $y_t = u_{t-\Theta}$ and $u_{t} = Gu_{t-1}$, where $G$ is a $p\times p$ matrix. Note that $y_t \neq Gu_{t-\Theta-1}$, since each coordinate of $u$ is shifted independently at random, and therefore, the matrix $G$ applies to elements of $u$ in different times. The following Lemma shows that we still have $\mathbb{E}_t[y_{t}y_{t-k}^T]=\mathbb{E}_{t,\Theta}[Gu_{t-\Theta-1}y_{t-k}^T]=\mathbb{E}_{t,\Theta}[Gu_{t}y_{t-k+\Theta+1}^T]$.

\begin{lem}\label{lem:G}
	Let $y_t$, $u_t$ and $G$ be defined as above. We have 	$\mathbb{E}_t[y_{t}y_{t-k}^T]=G\Sigma_{UY_{k-\Theta-1}}$.
\end{lem}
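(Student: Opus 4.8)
The plan is to establish the matrix identity one entry at a time and reduce everything to the scalar covariance definition, so that the preceding lemma (which writes a random-lag covariance as the $q$-weighted sum $\sum_\theta q_\theta \Sigma_{UY_\theta}$ of fixed-lag covariances) can be applied at the very end. Fix a pair of coordinates $(i,j)$. The first and decisive step is to expand the leading factor by one step of the recursion $u_t = G u_{t-1}$: since $y_t(i) = u_{t-\Theta(i)}(i)$ and $u_{t-\Theta(i)}(i) = \sum_m G_{im}\, u_{t-\Theta(i)-1}(m)$, linearity of expectation gives
\[
\mathbb{E}\!\left[y_t(i)\,y_{t-k}(j)\right] = \sum_m G_{im}\, \mathbb{E}\!\left[u_{t-\Theta(i)-1}(m)\, y_{t-k}(j)\right].
\]
This is exactly the point the paragraph before the lemma warns about: a single external delay $\Theta(i)$ sits in front of the whole sum, so $y_t \ne G\,u_{t-\Theta-1}$ as random vectors, yet the scalar expectations will turn out to be the right entries.

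Second, I would evaluate each scalar term $\mathbb{E}[u_{t-\Theta(i)-1}(m)\,y_{t-k}(j)]$. The delay $\Theta(i)$ is drawn at time $t$, whereas $y_{t-k}(j)$ is built from the delay drawn at time $t-k$; by the assumed temporal (and spatial) i.i.d.\ structure of $\Theta$ these are independent of each other and of the $u$-process, so I can condition on $\Theta(i)=\theta$ and average, obtaining $\sum_{\theta=0}^{\theta_{\max}} q_\theta\, \mathbb{E}[u_{t-\theta-1}(m)\,y_{t-k}(j)]$. Third, stationarity lets me shift the time origin by $\theta+1$ in each summand, turning it into $\mathbb{E}[u_t(m)\,y_{t-k+\theta+1}(j)] = \Sigma_{UY_{k-\theta-1}}(m,j)$, and the preceding lemma then collapses $\sum_\theta q_\theta \Sigma_{UY_{k-\theta-1}}(m,j)$ into $\Sigma_{UY_{k-\Theta-1}}(m,j)$. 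Substituting back gives $\mathbb{E}[y_t(i)\,y_{t-k}(j)] = \sum_m G_{im}\,\Sigma_{UY_{k-\Theta-1}}(m,j) = (G\,\Sigma_{UY_{k-\Theta-1}})(i,j)$, and since $(i,j)$ was arbitrary this is the claimed identity.

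The step I expect to be the real obstacle is the bookkeeping of which delay attaches to which factor in the conditioning step. After expanding $G$, the delay in the first factor is the coordinate-$i$ delay at time $t$, while the lag appearing in $\Sigma_{UY_{k-\Theta-1}}(m,j)$ is governed by a coordinate-$j$ delay and $y$ itself carries its own fresh delay; these are interchangeable only because the delays are i.i.d.\ across both coordinates and time, so in expectation any independent copy may be used. Making this interchange explicit --- rather than silently identifying $y_t$ with $G\,u_{t-\Theta-1}$ --- is the genuine content here, and I would be careful that the independence invoked in the conditioning step really holds (it does whenever the external delay is independent of the second factor, e.g.\ for the autoregressive lags $k \ge 1$ in which the lemma is used).
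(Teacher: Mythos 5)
Your proof is correct and follows essentially the same route as the paper's: expand one step of the recursion $u_t = Gu_{t-1}$ inside the expectation, condition on $\Theta(i)=\theta$, invoke stationarity to identify the conditional expectation with $\Sigma_{UY_{k-\theta-1}}$, and average over $q$ via the preceding lemma. The only cosmetic difference is that you work entrywise in $(i,j)$ while the paper works row-by-row, and your explicit remarks on the independence bookkeeping of the delays are a welcome clarification of what the paper leaves implicit.
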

\begin{proof}
	Let us denote $\Sigma=\mathbb{E}_t[y_{t}y_{t-k}^T]$. We have
	\begin{align}
	\nonumber \Sigma(i,:)&=\mathbb{E}_{t,\Theta}[u_{t-\Theta(i)}(i)y_{t-k}^T]\\
	\nonumber &=\mathbb{E}_{t,\Theta}[G(i,:)u_{t-\Theta(i)-1}y_{t-k}^T]\\
	\nonumber&=\mathbb{E}_{\Theta}[\mathbb{E}_t[G(i,:)u_{t-\Theta(i)-1} y_{t-k}^{T}| \Theta(i)=\theta]]\\
	\nonumber&=\mathbb{E}_{\Theta}[G(i,:)\Sigma_{UY_{k-\theta-1}}],
	\end{align}
	where $\Sigma(i,:)$ and $G(i,:)$ denote the $i$-th row of $\Sigma$ and $G$, respectively. Therefore, we have $\Sigma=\mathbb{E}_{\Theta}[G\Sigma_{UY_{k-\theta-1}}]=G\Sigma_{UY_{k-\Theta-1}}$.
\end{proof}

The following Lemma forms the basis of how we learn the parameters.
\begin{lem}
	\begin{equation}\label{Eq:opt1}
	\Resize{8cm}{
	\Sigma_{X_{i+1}}= (A+D)\Sigma_{X_i} - AD\Sigma_{X_{i-1}} + B \Sigma_{X_{i-\Theta}} \;\; , i\geq \max(\theta_{\max},1).}
	\end{equation}
\end{lem}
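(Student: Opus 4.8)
The plan is to eliminate the latent process by introducing the auxiliary (deterministic) cross-covariance $P_k := \mathbb{E}[z_{t-\Theta_t}x_{t-k}^T]$ and deriving two relations for it, one from each line of \eqref{Eq:Sys2}. First I would post-multiply the observation equation $x_t = z_{t-\Theta_t}+Dx_{t-1}+w_t$ by $x_{t-k}^T$ and take expectations. Since $w_t$ is temporally white and $x_{t-k}$ depends only on the noises up to time $t-k$, the term $\mathbb{E}[w_t x_{t-k}^T]$ vanishes whenever $k\geq 1$, and $\mathbb{E}[x_{t-1}x_{t-k}^T]=\Sigma_{X_{k-1}}$ by stationarity. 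This gives the identity $\Sigma_{X_k}=P_k+D\Sigma_{X_{k-1}}$, equivalently $P_k=\Sigma_{X_k}-D\Sigma_{X_{k-1}}$, valid for $k\geq 1$.

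Next I would obtain a recursion for $P_k$ from the state equation. Working row by row, I substitute $z_{t-\Theta_t(i)}(i)=[Az_{t-\Theta_t(i)-1}](i)+[Bx_{t-\Theta_t(i)-1}](i)+v_{t-\Theta_t(i)}(i)$ into $P_k(i,j)$. Conditioning on $\Theta_t(i)=\theta$, which is legitimate because $\Theta_t$ is independent of the process at times $<t$, and invoking stationarity exactly as in the first Lemma and in Lemma~\ref{lem:G}, the $A$- and $B$-contributions become $A\sum_\theta q_\theta \Sigma_{ZX_{k-\theta-1}}$ and $B\sum_\theta q_\theta \Sigma_{X_{k-\theta-1}}$. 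The state-noise term $\mathbb{E}[v_{t-\Theta_t(i)}(i)x_{t-k}(j)]$ vanishes once $k>\theta_{\max}$, since then $t-\Theta_t(i)>t-k$ makes $v_{t-\Theta_t(i)}$ a future, hence independent and zero-mean, noise relative to $x_{t-k}$. This yields $P_k = A\Sigma_{ZX_{k-\Theta-1}}+B\Sigma_{X_{k-\Theta-1}}$ for $k\geq\theta_{\max}+1$, with the $\Theta$-subscripted objects denoting the $q$-averages as in the first Lemma. The observation that collapses this into a usable form is that $\Sigma_{ZX_{k-\Theta-1}}=\sum_\theta q_\theta\Sigma_{ZX_{(k-1)-\theta}}$ is exactly $P_{k-1}$, so $P_k = AP_{k-1}+B\Sigma_{X_{k-\Theta-1}}$.

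Finally I would eliminate $P$ altogether. Substituting $P_k=\Sigma_{X_k}-D\Sigma_{X_{k-1}}$ and $P_{k-1}=\Sigma_{X_{k-1}}-D\Sigma_{X_{k-2}}$ into the last recursion and setting $k=i+1$, collecting terms gives $\Sigma_{X_{i+1}}=(A+D)\Sigma_{X_i}-AD\Sigma_{X_{i-1}}+B\Sigma_{X_{i-\Theta}}$. The range constraint emerges from tracking where each step is valid: the $P_k$ recursion requires $k=i+1\geq\theta_{\max}+1$, and re-using the observation identity at lag $k-1=i$ requires $i\geq 1$, so both hold exactly when $i\geq\max(\theta_{\max},1)$, as claimed.

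I expect the main obstacle to be the bookkeeping around the per-coordinate random lag in the $A$-term of the second step: expanding $z_{t-\Theta_t(i)}(i)$ one step back produces $z_{t-\Theta_t(i)-1}$ whose coordinates are all shifted by the single delay $\Theta_t(i)$ tied to row $i$, rather than by each coordinate's own independent delay. This is precisely the phenomenon flagged before Lemma~\ref{lem:G}, namely that $y_t\neq Gu_{t-\Theta-1}$. The resolution is to multiply by $A$ \emph{before} averaging over $\Theta$ on row $i$; because the mass function $q$ is common to all coordinates, $A$ factors out of the average and one legitimately recovers $A\Sigma_{ZX_{k-\Theta-1}}$. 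Getting this commutation right, and verifying that $\Sigma_{ZX_{k-\Theta-1}}$ really coincides with the earlier $P_{k-1}$, is where the argument must be most careful.
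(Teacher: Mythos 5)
Your proposal is correct and follows essentially the same route as the paper: your $P_k$ is exactly the paper's $\Sigma_{ZX_{k-\Theta_t}}$, your two identities are the paper's Equations for $\Sigma_{X_i}$ and $\Sigma_{X_{i+1}}$ (with Lemma~\ref{lem:G} handling the per-coordinate lag, as you note), and the elimination step is identical. The only cosmetic difference is that you argue the vanishing of the state-noise cross term $\mathbb{E}[v_{t-\Theta_t}x_{t-i-1}^T]$ directly from $t-\Theta_t > t-i-1$ for $i\geq\theta_{\max}$, whereas the paper reaches the same conclusion via a recursive expansion of $\Sigma_{V_{\Theta_t}X_{i+1}}$; both are valid.
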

\begin{proof}
	We have
	\begin{align}
	&\Sigma_{X_{i}}= \Sigma_{ZX_{i-\Theta_t}} + D\Sigma_{X_{i-1}}, i\geq 1,\label{Eq:eq_xi}\\
	&\Resize{8cm}{\Sigma_{X_{i+1}}= A\Sigma_{ZX_{i-\Theta_t}} + B\Sigma_{X_{i-\Theta_t}} + \Sigma_{V_{\Theta_t}X_{i+1}} + D\Sigma_{X_{i}}, i\geq 0.}\label{Eq:eq_xi+1}
	\end{align}
	Equation~\ref{Eq:eq_xi} is obtained by multiplying $x_{t-i}^T$ to the second equation of the model~\eqref{Eq:Sys2} and taking the expectation. Since $w_t$ is independent of $x_{t-i}, i \geq 1$, we have $\Sigma_{W_tX_{t-i}}=0$.
	Equation~\ref{Eq:eq_xi+1} is also obtained by first multiplying $x_{t-i-1}^T$ to the second equation of the model and taking the expectation, and then replacing $\Sigma_{ZX_{i+1-\Theta_t}}$ from the first equation, using Lemma~\ref{lem:G}.
	To derive equations which are independent of the structure of $\Sigma_V$, we need to find $i$ such that $\Sigma_{V_{\Theta_t}X_{i+1}}$ is zero. We have
	$$\Sigma_{V_{\Theta_t}X_{i+1}}=\Sigma_{V_{\Theta_t}Z_{i+1+\Theta_{t-(i+1)}}}+\Sigma_{V_{\Theta_t}X_{i+2}}D^T.$$
	Recall that $v_t$ is independent of $z_{t-i}$ and $x_{t-i}$ for $i > 0$. Thus, for the first term to be zero, we must have $\Theta_t<i+1+\Theta_{t-(i+1)}$. Since $\Theta$ lies in $[0\,,\,\theta_{\max}]$ and is i.i.d., we must have $i>\theta_{\max}-1$.
	For the second term to be zero, we must have $\Theta_t < i+2$ and, thus, $i > \theta_{\max}-2$.
	Therefore, for $i\geq \theta_{\max}$, $\Sigma_{V_{\Theta_t}X_{i+1}}=0$. By replacing $\Sigma_{ZX_{i-\Theta_t}}$ in Equ.~\ref{Eq:eq_xi+1} from Equ.~\ref{Eq:eq_xi}, we obtain Equ.~\ref{Eq:opt1}.
\end{proof}

\begin{center}
	\begin{algorithm*}[ht]
		\caption{Learning the Sign-Sparsity pattern of $A$}
		\label{Alg1}
		\begin{algorithmic}[1]
			
			\INPUT{$\{x_t\}_{t=1,\dots,T}$}
			\OUTPUT{$S$: the sign-sparsity pattern of $A$}
			
			\State Determine $\theta_{\max}$, $\lambda_1$ and $\lambda_2$ using cross-validation.
			
			\State Compute the sample covariances $\hat{\Sigma}_{X_i}, i \in [0 \, , \, 2\theta_{\max}+2]$, from the data $\{x_t\}_{t=1,\dots,T}$.
			
			\State Let $B_1$, $K_1$ and $K_2$ be $p\times p$ matrices, where $p$ is the number of observed variables. Let $q$ be a vector with the length of $\theta_{\max}+1$. Solve the following optimization problem.
			\begin{align}
			\nonumber [{B_1}^*,{K_1}^*,{K_2}^*,q^*]=&\underset{B_1,K_1,K_2,q}{\text{arg\,min}} \sum_{i=\max(\theta_{\max},1)}^{2\theta_{\max}+1} \bigg\|B_1\hat{\Sigma}_{X_{i+1}}- K_1\hat{\Sigma}_{X_i} +K_2\hat{\Sigma}_{X_{i-1}} - \sum_{{\theta}=0}^{\theta_{\max}} {q_{\theta}\hat{\Sigma}_{X_{i-{\theta}}}} \bigg\|_{\text{Fro}}\\
			\nonumber & \mbox{s.t. } \;  B_1 \text{ is diagonal and positive definite},\\
			\nonumber & \;\;\;\;\;\;  \|K_1\|_1 \leq \lambda_1 \text{ and } \|K_2\|_1 \leq \lambda_2, \\
			\nonumber & \;\;\;\;\;\;  \sum_{{\theta}=0}^{\theta_{\max}}q_{\theta}=1, q_{\theta} \geq 0.
			\end{align}
			
			\State Return sign-sparsity pattern of $K_1^*$ as $S$.
			
		\end{algorithmic}
	\end{algorithm*}
\end{center}
\section{Algorithm and Main Results}

Our goal is to learn system matrices $A$, $B$ and $D$ from a sequence of observations $\{x_t\}_{t=1,\dots,T}$, where $T$ is the length of the time series. In the following, we present the identifiability results and propose an algorithm for recovering the models parameters in a specific setting. 

\subsection{Identifiability of Model Parameters}

We first define the notion of generic and identifiable parameters.

\begin{defin}
A collection of parameters is said to be generic if each of the scalar parameters in the collection are chosen {\em independently} from a continuous distribution with a probability density. More formally, a collection of random variables is said to be generic if the probability measure of the collection is the product of probability measures on each scalar parameter and each probability measure is absolutely continuous with respect to the Lebesgue measure.
\end{defin} 

\begin{defin}[\cite{rangel2005modeling}]
Consider an observable random variable defined having probability distribution $P_{\pi}\in \{P_{\phi}:\boldsymbol{\phi} \in \Pi\}$, where the parameter space $\Pi$ is an open subset of the multi-dimensional Euclidean space. We say that this model is {\em identifiable} if the family $\{P_{\phi}:\boldsymbol{\phi} \in \Pi\}$ has the property that $P_{\pi}\equiv P_{\phi}$ if and only if $\boldsymbol{\pi}= \boldsymbol{\phi} \in \Pi$. In this parametric setting, we say that the parameter vector $\boldsymbol{\pi}$ is identifiable.
\end{defin} 
Identifiability property is important, for without it, it would be possible for different values of the parameters $\boldsymbol{\pi}$ to give rise to identically distributed observables, making the statistical problem of estimating $\boldsymbol{\pi}$ ill-posed \cite{rangel2005modeling}.

We assume that all the system parameters including matrices $A$, $B$, $D$, $\Sigma_V$, $\Sigma_W$, and the pmf $q$ are generic. 
Since there are as many latent variables are observed variables, not all entries of $A$, $B$ and $D$ are learnable even as $T$ grows to infinity. The following theorem states the identifiability of the system parameters in \eqref{Eq:Sys2} as the number of samples grows.

\begin{thm}\label{thm:param}
	Consider a dynamical system as defined in~\eqref{Eq:Sys2} and let $T \rightarrow \infty$. The following statements hold for all generic parameters of $A$, $B$, $D$, $\Sigma_V$, $\Sigma_W$ and $q$.
	\begin{itemize}
		\item $\theta_{\max}\in \{0,1\}$: We can identify $A+q_0B+D$ and $q_1B-AD$. Note that for $\theta_{\max}=0$, we have $q_0=1$ and $q_1=0$.
		\item $\theta_{\max}\geq 2$: We can identify $A+q_0B+D$, $q_1B-AD$, and the matrix $B$ up to a scalar multiple.
	\end{itemize}
\end{thm}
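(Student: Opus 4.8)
The plan is to convert the covariance identity of the preceding Lemma into a linear matrix recurrence whose coefficient matrices are exactly the quantities named in the theorem, and then to argue that, because $T\to\infty$ supplies the covariances $\{\Sigma_{X_i}\}$ exactly, those coefficients are uniquely determined by the observed law for generic parameters. First I would insert $\Sigma_{X_{i-\Theta}}=\sum_{\theta=0}^{\theta_{\max}}q_\theta\Sigma_{X_{i-\theta}}$ (the first Lemma) into~\eqref{Eq:opt1} and regroup by lag, giving for every $i\geq\max(\theta_{\max},1)$
\begin{equation}
\Resize{8.4cm}{\Sigma_{X_{i+1}} = (A+q_0B+D)\Sigma_{X_i} + (q_1B-AD)\Sigma_{X_{i-1}} + \sum_{\theta=2}^{\theta_{\max}} q_\theta B\,\Sigma_{X_{i-\theta}}.}
\end{equation}
The candidate identifiable objects now appear literally as coefficient matrices: $C_0:=A+q_0B+D$, $C_1:=q_1B-AD$, and $C_\theta:=q_\theta B$ for $\theta\geq2$. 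When $\theta_{\max}\in\{0,1\}$ the final sum is empty and only $C_0,C_1$ survive (with $q_0=1,q_1=0$ at $\theta_{\max}=0$), which already explains the dichotomy in the statement.

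Next I would reduce uniqueness of the $C_\theta$ to a rank condition. Set $r:=\max(\theta_{\max},1)+1$, the number of distinct lag matrices, and stack the recurrence over $r$ consecutive indices to obtain the block system $\mathbf{T}=[\,C_0\mid C_1\mid\cdots\,]\,\mathbf{\Sigma}$, where $\mathbf{\Sigma}$ is the $rp\times rp$ block matrix of lagged covariances (assembled from $\Sigma_{X_0},\dots,\Sigma_{X_{2\theta_{\max}+1}}$, exactly the covariances the algorithm forms) and $\mathbf{T}$ collects the $\Sigma_{X_{i+1}}$. Any two parameter settings inducing the same observed distribution induce the same second moments, hence the same $\mathbf{T}$ and $\mathbf{\Sigma}$; the coefficient block, and therefore each $C_\theta$, is then pinned down uniquely precisely when $\mathbf{\Sigma}$ is nonsingular. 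So identifiability of the listed quantities follows once I show $\det\mathbf{\Sigma}\neq0$ at generic parameters.

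The heart of the argument, and the step I expect to be hardest, is this nonsingularity. The stationary covariances $\Sigma_{X_i}$ depend on $A,B,D,\Sigma_V,\Sigma_W,q$ analytically through the stationarity (Lyapunov-type) relations of~\eqref{Eq:Sys2}, so $\det\mathbf{\Sigma}$ is a real-analytic function of the scalar parameters; a real-analytic function that is not identically zero vanishes only on a Lebesgue-null set, so it suffices to produce one admissible parameter choice at which $\mathbf{\Sigma}$ is nonsingular. Equivalently, I must exhibit parameters for which the covariance sequence satisfies no recurrence of order below $r$, since a lower-order recurrence is exactly what makes the coefficient block non-unique. The real work is constructing (or otherwise certifying the existence of) such a witness and checking that the simplex constraint $\sum_\theta q_\theta=1$ on the pmf does not trap all of its feasible set inside the bad variety; I would argue that the spectral and algebraic cancellations forcing a degenerate recurrence impose nontrivial analytic constraints that generic $A,B,D,q$ violate.

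Finally I would read off the two cases. For $\theta_{\max}\in\{0,1\}$ the recovered coefficients are only $C_0=A+q_0B+D$ and $C_1=q_1B-AD$, which is the first bullet. For $\theta_{\max}\geq2$ we additionally recover every $C_\theta=q_\theta B$; since $q$ is generic, some $q_\theta$ with $\theta\geq2$ is nonzero, and $C_\theta=q_\theta B$ then determines $B$ up to the unknown scalar $q_\theta$, that is, up to a scalar multiple, giving the second bullet.
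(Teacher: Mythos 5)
Your overall strategy mirrors the paper's: regroup the covariance recurrence so that $A+q_0B+D$, $q_1B-AD$ and $q_\theta B$ appear as coefficient matrices, stack it into a block-Toeplitz system (the paper's $M^{(k)}$ in Equ.~\ref{Mk}), reduce identifiability to nonsingularity of that matrix, and dispose of genericity by noting that a not-identically-zero analytic function of the parameters vanishes only on a null set. The regrouping and the reduction are correct. The problem is that you stop exactly at the step you yourself identify as the heart of the argument: you never exhibit the witness parameter instance at which $\det\mathbf{\Sigma}\neq 0$, and without it the ``not identically zero'' premise is unsupported --- a priori the determinant could vanish identically on the whole constrained parameter set, e.g.\ because the simplex constraint on $q$ or a structural Toeplitz degeneracy forces a lower-order recurrence everywhere. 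The paper does this work explicitly: for $\theta_{\max}\in\{0,1\}$ it takes $A=D=\tfrac{I}{2}$, $B=0$, $\Sigma_V=\Sigma_W=I$ and computes the four covariances directly (Lemma~\ref{lm:inst0}); for $\theta_{\max}\geq 2$ it takes $A=D=0$, $B=bI$, $\Sigma_V=0$, $\Sigma_W=I$ and uniform $q$, reduces $M^{(\theta_{\max})}$ to a scalar Toeplitz matrix $M_1$, solves for the $\sigma_{X_i}$ in closed form, and shows by column operations that $M_1$ becomes upper triangular with nonzero diagonal (Lemma~\ref{lm:inst}). That computation is not a formality: the constant tail $\sigma_{X_1}=\cdots=\sigma_{X_{\theta_{\max}+1}}$ makes $M_1$ dangerously close to singular, and the argument hinges on the single strict inequality $\sigma_{X_{\theta_{\max}+2}}=b\,\sigma_{X_{\theta_{\max}+1}}\neq\sigma_{X_{\theta_{\max}+1}}$. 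Relatedly, the paper first establishes (Lemmas~\ref{lm:VV} and~\ref{lm:VX}) that each $\Sigma_{X_i}$ is a rational function of the parameters, so the zero-set lemma applies to an honest polynomial; your appeal to real-analyticity of the stationary covariances would need an analogous justification.

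A second, smaller omission: your uniqueness argument presumes both parameter settings share the same recurrence order $r$, i.e.\ the same $\theta_{\max}$. The paper closes this by showing that $M^{(k)}$ is \emph{not} full rank for $k>\max(\theta_{\max},1)$ (Lemmas~\ref{lm:notfull0} and~\ref{lm:notfull}), so that $\max(\theta_{\max},1)$ --- and hence, for $\theta_{\max}\geq 2$, the value of $\theta_{\max}$ itself --- can be read off from the covariances before the coefficient system is formed (Corollary~\ref{cor:theta}); this is also why the first bullet of the theorem cannot separate $\theta_{\max}=0$ from $\theta_{\max}=1$. You gesture at this with ``no recurrence of order below $r$'' but never prove the matching upper bound.
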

\begin{proof}
	Proof is provided in Appendix~\ref{apx:ProofThm1}.
\end{proof}

The following corollary is implied from Theorem~\ref{thm:param}.
\begin{cor}\label{cor:onesparse}
For all $\theta_{\max}$, if only one of the matrices $A$, $B$, or $D$ is known to be non-diagonal, then the non-diagonal entries of that matrix can be determined as $T \rightarrow \infty$.
\end{cor}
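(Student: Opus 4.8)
The plan is to recover the off-diagonal entries by reading them off the identifiable combinations supplied by Theorem~\ref{thm:param}, using the elementary fact that a diagonal matrix contributes nothing off the diagonal and that a product of two diagonal matrices is again diagonal. Throughout I would write $[M]_{\mathrm{off}}$ for the matrix obtained from $M$ by zeroing its diagonal, so that $[M+N]_{\mathrm{off}}=[M]_{\mathrm{off}}+[N]_{\mathrm{off}}$ and $[M]_{\mathrm{off}}=0$ whenever $M$ is diagonal. The objects I treat as known are exactly the coefficient matrices of the recurrence~\eqref{Eq:opt1}: the matrix $A+q_0B+D$, the matrix $q_1B-AD$, and, when $\theta_{\max}\geq 2$, the matrices $q_\theta B$ for $2\leq\theta\leq\theta_{\max}$ (this last group being the content of ``$B$ up to a scalar multiple''). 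Each is identifiable as $T\to\infty$ by Theorem~\ref{thm:param}.

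First I would dispose of the two easy cases. If $A$ is the only non-diagonal matrix, then $B$ and $D$ are diagonal, so $[A+q_0B+D]_{\mathrm{off}}=[A]_{\mathrm{off}}$; since the left-hand side is identifiable, every off-diagonal entry of $A$ is determined, and this holds for all $\theta_{\max}$. Symmetrically, if $D$ is the only non-diagonal matrix, then $A$ and $B$ are diagonal and $[A+q_0B+D]_{\mathrm{off}}=[D]_{\mathrm{off}}$, which again determines the off-diagonal entries of $D$. Neither case uses $q_1B-AD$ or any information about $q$.

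The remaining case, in which $B$ is the only non-diagonal matrix, is the one I expect to be the real obstacle, precisely because Theorem~\ref{thm:param} only pins $B$ down up to an unknown scalar, so a naive attempt to divide out $q_0$ or $q_1$ stalls. Here $A$ and $D$ are diagonal, hence so is $AD$, giving $[A+q_0B+D]_{\mathrm{off}}=q_0[B]_{\mathrm{off}}$, $[q_1B-AD]_{\mathrm{off}}=q_1[B]_{\mathrm{off}}$, and $[q_\theta B]_{\mathrm{off}}=q_\theta[B]_{\mathrm{off}}$ for $\theta\geq 2$. The idea that breaks the scale ambiguity is to invoke the normalization $\sum_{\theta=0}^{\theta_{\max}}q_\theta=1$: summing the identifiable off-diagonal contributions over all lags yields
\[
\sum_{\theta=0}^{\theta_{\max}} [q_\theta B]_{\mathrm{off}} = \Big(\sum_{\theta=0}^{\theta_{\max}} q_\theta\Big)[B]_{\mathrm{off}} = [B]_{\mathrm{off}},
\]
where the $\theta=0$ term is $[A+q_0B+D]_{\mathrm{off}}$, the $\theta=1$ term is $[q_1B-AD]_{\mathrm{off}}$, and the higher terms are the identifiable $[q_\theta B]_{\mathrm{off}}$. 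Thus $[B]_{\mathrm{off}}$ is a sum of identifiable matrices, hence determined. This specializes correctly: for $\theta_{\max}=0$ we have $q_0=1$ and $[B]_{\mathrm{off}}=[A+B+D]_{\mathrm{off}}$, while for $\theta_{\max}=1$ we get $[B]_{\mathrm{off}}=[A+q_0B+D]_{\mathrm{off}}+[q_1B-AD]_{\mathrm{off}}$.

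The only point where I would be careful is justifying the use of the individual coefficient matrices $q_\theta B$ rather than merely ``$B$ up to a scalar'': the sum-to-one trick genuinely needs each $q_\theta B$ as a known matrix, and this is exactly what the proof of Theorem~\ref{thm:param} produces when it extracts $B$ up to scale from the higher-lag coefficients of~\eqref{Eq:opt1}. Beyond the hypotheses already assumed in Theorem~\ref{thm:param}, no further genericity is required, since once those coefficients are in hand the argument is a direct entrywise read-off.
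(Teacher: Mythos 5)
Your proof is correct and follows the route the paper intends: the paper offers no written proof of this corollary (it is stated only as ``implied from Theorem~\ref{thm:param}''), and your argument is the natural entrywise read-off of off-diagonal parts from the identifiable combinations $A+q_0B+D$, $q_1B-AD$, and $q_\theta B$ for $\theta\geq 2$. The one step that goes beyond a bare read-off --- using $\sum_{\theta=0}^{\theta_{\max}}q_\theta=1$ to resolve the scalar ambiguity in the case where $B$ is the non-diagonal matrix --- is exactly the observation needed to make the implication go through, and you handle it correctly for every value of $\theta_{\max}$.
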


\subsection{Learning Algorithm}
Theorem \ref{thm:param} states that some combinations of system matrices are identifiable as the sample size grows to infinity, but does not provide an algorithm to estimate these matrices under finite sample regimes. 
In this section, we consider a specific setting and develop a consistent algorithm based on a convex optimization framework that recovers the true system parameters. 

In particular, we are interested in the setting where each latent variable interacts with only a few other latent variables, in addition to its corresponding observed variable. To isolate the interactions between latent variables, we assume that each observed variable has a causal relationship with only itself. Thus, the matrix $A$ is sparse and matrices $B$ and $D$ are diagonal.
Moreover, we assume that the matrix $B$ has positive diagonals, implying that each variable positively affects itself. Matrix $D$ also restricted to nonnegative (possibly zero) diagonals. 
In this setting, we are interested in learning the sign-sparsity pattern of $A$, since it captures the underlying causal graph of the process.
Through simulation studies with real-world data, we demonstrate the above setting is powerful enough to capture many practical phenomena.

Let $B'={B}^{-1}$, $K_1 = B'(A+D)$ and $K_2=B'AD$. Note that with $A$ being sparse, and $B$ and $D$ diagonal, we have that matrix $B'$ is diagonal and matrices $K_1$ and $K_2$ are sparse, with the same sign-sparsity pattern as $A$. We can write Equ.~\ref{Eq:opt1} as follows.
\begin{align}\label{Eq:opt2}
\Resize{8cm}{
	B'\Sigma_{X_{i+1}}- K_1\Sigma_{X_i} +K_2\Sigma_{X_{i-1}} - \Sigma_{X_{i-\Theta}}=0 \;\; , i\geq \max(\theta_{\max},1).}
\end{align}
Equation~\ref{Eq:opt2} provides a system of linear equations, from which we can obtain $K_1$ and $K_2$ and hence the sign-sparsity pattern of $A$. 
Algorithm~\ref{Alg1} gives the detailed description of learning the sign-sparsity pattern of $A$, and Lemma \ref{clm:alg} states its consistency.
Note that in algorithm, we return the sign-sparsity pattern of $K_1$, because the norm of $K_1$ is larger than the norm of $K_2$, and, as a result, its values are more reliable.

In algorithm~\ref{Alg1}, we use the sample covariance of the process $\{x_t\}$, calculated as $\hat{\Sigma}_{X_i}=\frac{1}{T} \sum_{t=i+1}^T x_t x^T_{t-i}$.
The $L_1$ norm--the sum of the absolute values of elements--of the matrix is denoted by $\|\cdot\|_1$, and used as a convex relaxation of the $L_0$ norm--the number of nonzero elements. Also, the Frobenius norm of a matrix, denoted by $\|\cdot\|_{\text{Fro}}$, is the square root of the sum of the squares of elements of the matrix.

\begin{lem}\label{clm:alg}
	Consider the model in~\ref{Eq:Sys2} with $A$ sparse, and $B$ and $D$ diagonal. Algorithm~\ref{Alg1} outputs the true sign-sparsity pattern of $A$ as $T\rightarrow \infty$.
\end{lem}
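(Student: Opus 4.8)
The plan is to view Algorithm~\ref{Alg1} as an M-estimator and establish consistency through three ingredients: convergence of the sample covariances to their population values, feasibility and zero-residual of the true parameters, and uniqueness of the population minimizer under genericity. The sign-sparsity conclusion is then immediate from the off-diagonal structure of the population-limit $K_1$.

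First I would record the population identities. By the lemma that produced Equ.~\ref{Eq:opt1} (equivalently Equ.~\ref{Eq:opt2}), the true quantities $B'=B^{-1}$, $K_1=B'(A+D)$, $K_2=B'AD$ and the true pmf $q$ satisfy $B'\Sigma_{X_{i+1}}-K_1\Sigma_{X_i}+K_2\Sigma_{X_{i-1}}-\sum_{\theta=0}^{\theta_{\max}}q_\theta\Sigma_{X_{i-\theta}}=0$ for every $i$ in the summation range of the algorithm. This tuple is feasible for the program: $B'$ is diagonal and positive definite because $B$ is diagonal with positive diagonal, $q$ lies in the probability simplex, and for $\lambda_1\ge\|K_1\|_1$ and $\lambda_2\ge\|K_2\|_1$ (the values cross-validation selects in the limit) the $L_1$ constraints are met. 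Hence the true tuple attains the value $0$, the global minimum of the population objective, namely the algorithm's objective with each $\hat\Sigma_{X_i}$ replaced by $\Sigma_{X_i}$.

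Second, since $\{x_t\}$ is a stable, zero-mean, stationary Gaussian process, it is ergodic, so $\hat\Sigma_{X_i}\to\Sigma_{X_i}$ almost surely for each fixed lag as $T\to\infty$. The objective depends continuously on these finitely many sample covariances, so it converges to the population objective uniformly over the feasible region (which, after noting that the constraints and the equations bound the minimizers, we may take to be compact). A standard argmin-consistency argument then forces any sequence of optimizers $(B_1^*,K_1^*,K_2^*,q^*)$ to converge to a feasible tuple of zero residual, i.e.\ to a population minimizer.

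The crux, which I expect to be the main obstacle, is to show that this zero-residual feasible point is unique and therefore equals the truth, so that $K_1^*\to B^{-1}(A+D)$. Because Equ.~\ref{Eq:opt2} is linear in $(B_1,K_1,K_2,q)$ and the normalization $\sum_{\theta}q_\theta=1$ eliminates the only scaling freedom of the system, uniqueness reduces to the associated linear map having full column rank. I would argue this holds generically: for $\theta_{\max}\ge1$ the system is overdetermined (its $p^2(\theta_{\max}+2)$ scalar equations exceed the $2p^2+p+\theta_{\max}$ unknowns), and a maximal minor of its coefficient matrix is a polynomial in the entries of $A$, $B$, $D$, $\Sigma_V$, $\Sigma_W$ and $q$; exhibiting a single parameter instance where this minor is nonzero shows the polynomial is not identically zero, whence genericity makes it nonzero almost surely. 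This is precisely the identifiability content underlying Theorem~\ref{thm:param}, which I would invoke (together with the diagonality of $B$ and $D$) to certify that the true $(B',K_1,K_2,q)$ is the unique admissible solution. Once this is in hand, the off-diagonal entries obey $K_1^*(i,j)\to(B^{-1})_{ii}A_{ij}$ with $(B^{-1})_{ii}>0$, so in the limit an off-diagonal entry of $K_1^*$ is nonzero precisely when $A_{ij}\neq0$ and shares its sign; hence the returned pattern is exactly the sign-sparsity pattern of $A$.
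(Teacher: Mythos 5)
Your proposal is correct and follows essentially the same route as the paper: with exact covariances in the limit the true parameters give zero residual, identifiability (Theorem~\ref{thm:param} via Corollary~\ref{cor:onesparse}) pins down the zero-residual solution, and the positive diagonal of $B^{-1}$ transfers the sign-sparsity pattern from $K_1$ to $A$. You simply make explicit the ergodicity/argmin-consistency and rank-uniqueness steps that the paper leaves implicit.
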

\begin{proof}
Algorithm~\ref{Alg1} finds the variables that minimize the error in the linear system of equations~\ref{Eq:opt2}. Since we have the exact estimate of the true covariances for $T\rightarrow \infty$, the objective function is at its minimum (zero) with arguments derived from the system parameters. 
Under the algorithm setting, $A$ is sparse, and $B$ and $D$ are diagonal (with non-negative diagonals). Thus, according to Corollary~\ref{cor:onesparse}, 
the sign-sparsity pattern of $A$ is identifiable as $T\rightarrow \infty$, and hence the algorithm outputs the true sign-sparsity pattern of $A$.
\end{proof}
Lemma \ref{clm:alg} states that the algorithm is \emph{consistent}; that is, Alg.~\ref{Alg1} outputs the same sign-sparsity pattern as the true $A$ as $T$ grows to infinity. A detailed analysis of the error rate as a function of $T$ is an important direction of future research.

\section{Experimental Results}

In this section, we study the performance of our method for three types of data: synthetic data, stock return data,  and climate data. In all three cases, we compare the performance of Alg.~\ref{Alg1} with the baseline Granger Lasso method~\cite{arnold2007temporal}. A general form of the Granger Lasso method can be written as follows.
\begin{align}
\Resize{8.9cm}{
\nonumber [A_1^*, \dots, A_L^*]=\underset{A_1, \dots, A_L}{\text{arg\,min}} \sum_{t=L+1}^{T} \bigg\| x_t - \sum_{i=1}^{L}A_ix_{t-i}\bigg\|_{\text{2}} + \sum_{i=1}^{L}\lambda_i \|A_i\|_1,}
\end{align}
where $\|\cdot\|_2$ denotes the $L_2$ norm--the square root of the sum of the squares of elements--of the vector, and for $i=1,\dots,L$, $A_i$ are $p\times p$ matrices and $\lambda_i$ are the penalty parameters. The dependency matrix $A_G^*$ is then obtained as $A_{G}^*=\sum_{i=1}^{L}A_i^*$.

For synthetic date, we also compare our method with a method \cite{jalali2011learning} proposed for first order vector autoregressive (VAR) model with hidden variables: 
\begin{align}\label{VAR}
\begin{cases}
z_t = Az_{t-1} + Bx_{t-1} + v_t \\
x_t = Cz_{t-1} + Dx_{t-1} + w_t.
\end{cases}
\end{align}
In this model, it is assumed that $|z|\ll|x|$, and $D$ is sparse and captures the underlying interactions among the variables. The dependency structure is learned by decomposing a matrix as a sum of low rank and sparse matrices, as follows.  
\begin{align}
\Resize{8.9cm}{
	\nonumber (D^*, L^*)=\underset{D, L}{\text{arg\,min}} \sum_{t=2}^{T} \bigg\| x_t - (D+L)x_{t-1}\bigg\|_{\text{2}} + \lambda_D \|D\|_1 + \lambda_L \|L\|_*,}
\end{align}
where $\|\cdot\|_*$ is the nuclear norm, i.e. sum of singular values, a convex surrogate for low-rank. 

\subsection{Synthetic Data}

The synthetic datasets are generated according to the model in~\eqref{Eq:Sys2} to study the performance of the Algorithm~\ref{Alg1} in recovering the true underlying temporal dependency graph of the latent variables. 
For matrix $A$, we generate a sparse matrix, where the sign of the nonzero elements are randomly assigned and the absolute value of the nonzero elements are generated uniformly at random. The diagonal elements of $B$ are also generated randomly according to a uniform distribution, and we set $D = 0$, which results in $K_2=0$. The matrices are properly scaled to ensure that the system is stable. Algorithm~\ref{Alg1} thus solves the following system of equations
$$B_1\Sigma_{X_{i+1}}- A_1\Sigma_{X_i} - \Sigma_{X_{i-\Theta}}=0 \;\; , i\geq \max(\theta_{\max},1),$$
where $A_1 = B^{-1}A$ and $B_1=B^{-1}$. Note that since $B$ is diagonal and its diagonal elements are positive, the sign-sparsity pattern of $A$ and $A_1$ are the same.

For each experiment, we generate $20$ random datasets and report the average performance on them.
Figure~\ref{fig:cmpr} shows the comparisons. For the first experiment, we consider $p=20$ number of variables, an average of five nonzero elements per each row of $A$, and $\theta_{\max}=5$. The results are reported for two different number of samples $T=5\times 10^4$ and $10^5$. For the second experiment, we consider $p=20$ number of variables, an average of two nonzero elements per each row of $A$, and $T=10^5$. The results are reported for two different values of $5$ and $10$ for $\theta_{\max}$.

In both experiments, we varied the sparsity threshold and reported the ROC curve which is the true positive rate versus the false positive rate of the sign-sparsity pattern of $A$.
Generally, with more number of samples and smaller maximum delay, we obtain a better estimation of the sign-sparsity pattern of $A$. Notably, our method significantly outperforms the Granger Lasso method and the method for first order VAR model with hidden variables \cite{jalali2011learning}, especially with small false positive rates. The results also indicate that the models in which $|z|\ll|x|$ cannot capture what our proposed dynamical system is able to model.

\begin{figure}[!t]
	\centering
	\begin{subfigure}[b]{0.4\textwidth}
		\centering
		\includegraphics[width=1\textwidth]{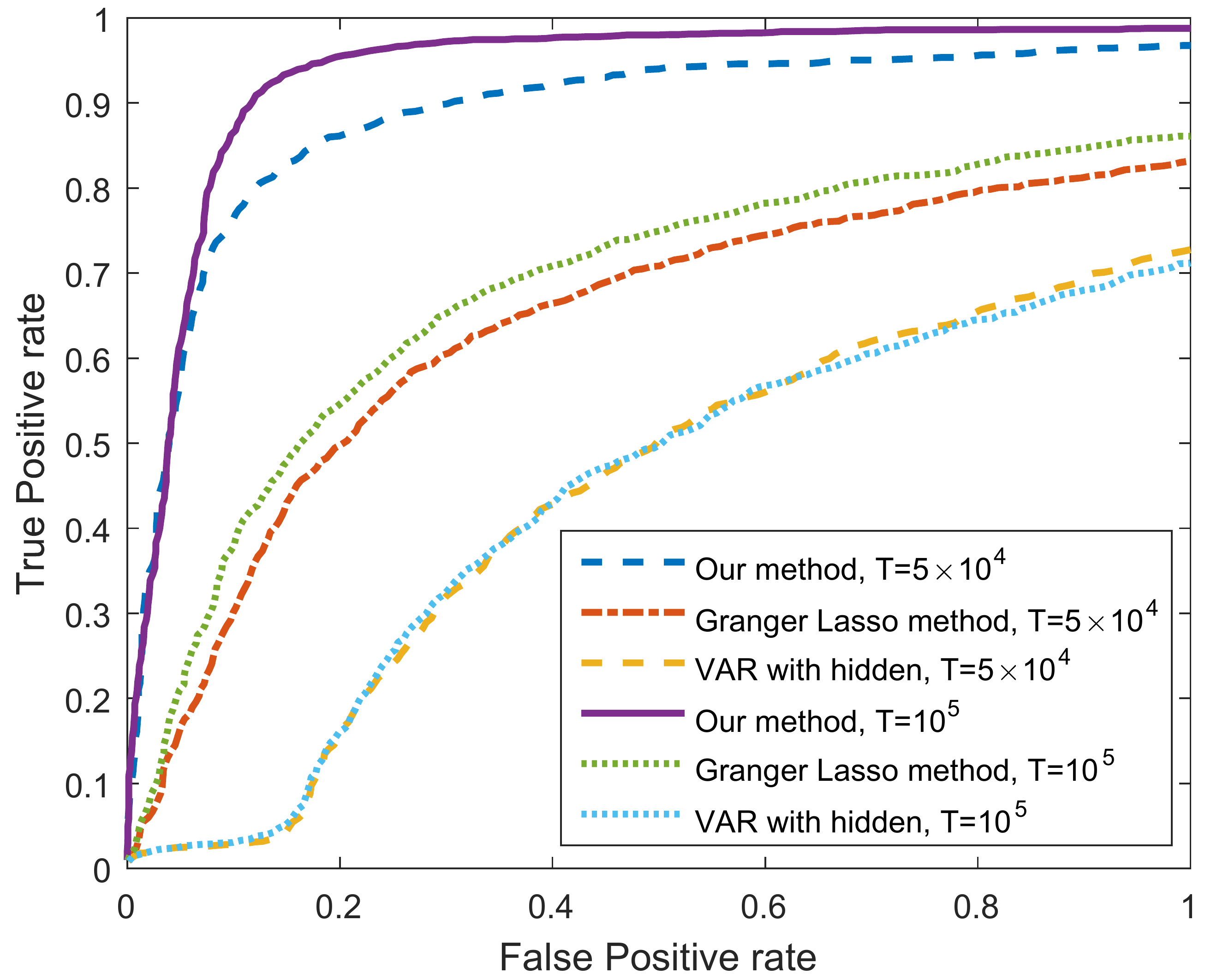}
		\caption{}
	\end{subfigure}
	\begin{subfigure}[b]{0.4\textwidth}
		\centering
		\includegraphics[width=1\textwidth]{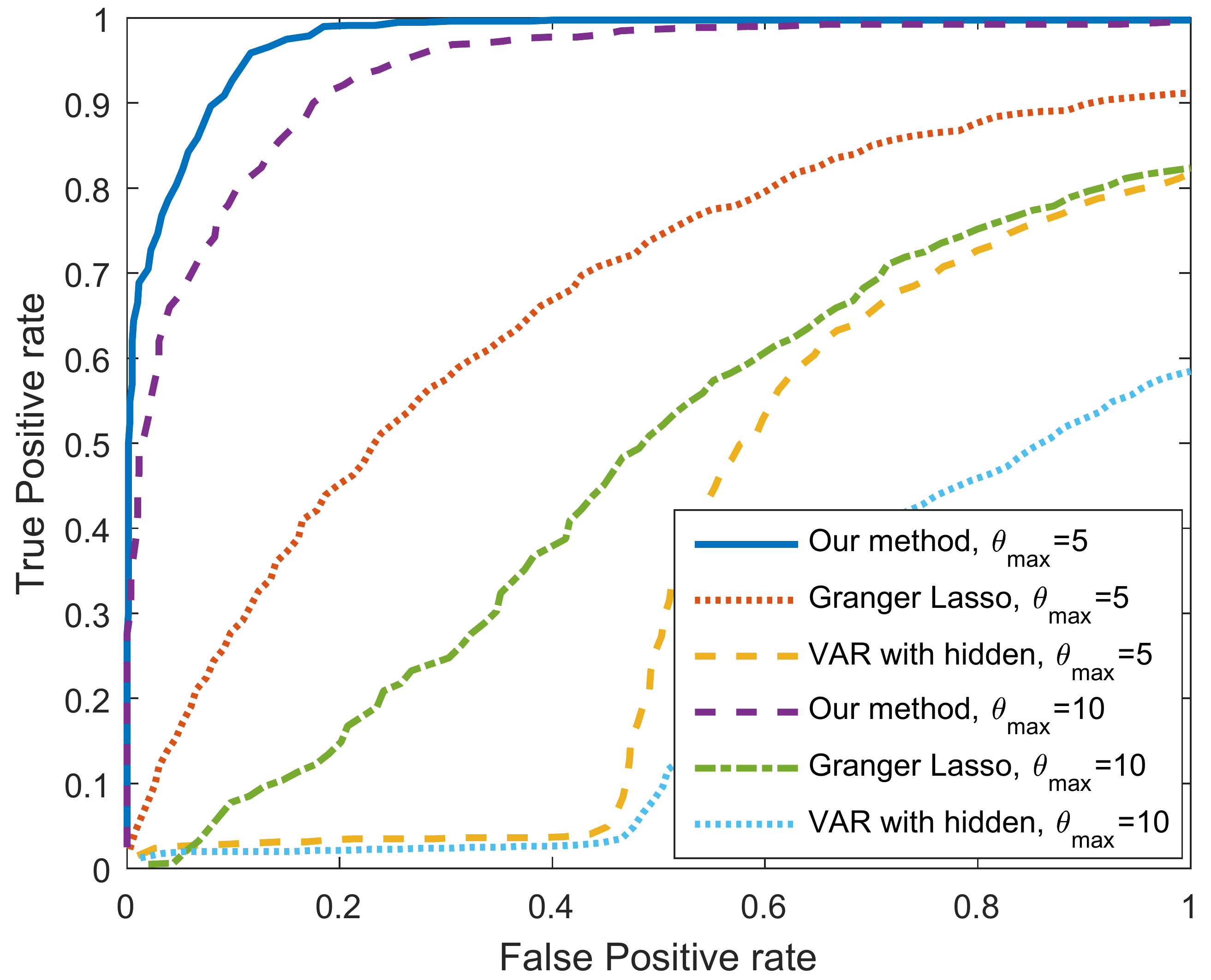}
		\caption{}
	\end{subfigure}
	\caption{Results on synthetic data for our proposed method, the Granger Lasso method and the vector autoregressive model with hidden variables~\ref{VAR}. (a) The results for $p=20$, on average five nonzero elements per each row of $A$, $\theta_{\max}=5$ and different number of samples. (a) The results for $p=20$, on average two nonzero elements per each row of $A$, $T=10^5$ and  different values for $\theta_{\max}$.}
	\label{fig:cmpr}
\end{figure}


\subsection{Stock Market Data} \label{sec:stock}

We take the end-of-the-day closing stock prices of $20$ companies for $11$ years in the period from the beginning of $2005$ to the end of $2015$ (roughly $250$ samples per year).
The companies are as follows: Apple, HP, IBM, Microsoft, Xerox, AT\&T, Verizon, Comcast, Oracle, Target, Walmart, Bank of America, Regions Financial, U.S. Bank, Wells Fargo, American Airlines, Caterpillar, Honeywell, International Paper and Weyerhaeuser. The data is collected from Google Finance website~\cite{googlefinance}. The data are normalized such that each variable has zero mean and unit variance.

Since the ground truth is not known, we follow the standard procedure of using prediction error to evaluate the algorithms. We train the models for the data from the first $10$ years and do the prediction for the final year. At each time, prediction is done for the $n$-th sample in future, where $n=1,2,\dots, 10$. For example, $n=3$ means that the stock price at the end of the day for the third day in the future is forecasted. We use $10$-fold cross-validation to tune the parameters.

For simplicity and due to limited training data, we set $D = dI$, and obtain $d$ from cross validation.
Cross validations result in $d=0.8$ and $\theta_{\max}=0$ for our method and the maximum delay of $L=1$ for the Granger Lasso method. \footnote{Note that $\theta_{\max}=0$ in our model corresponds to $L=1$ in Granger Lasso model, since in our model~\ref{Eq:Sys2} with $\theta_{\max}=0$, $z_t$ depends on $x_{t-1}$.} The results for the maximum delay is sensible, since changes tend to be incorporated into stock prices relatively quickly. We suspect more delays would be used if we consider prices at finer granularities (e.g. hourly or minute level).
Figure~\ref{fig:stock} shows the normalized mean square error for prediction results. It can be seen that our method outperforms the Granger Lasso method, especially for predicting farther samples in future.

\begin{figure}[!t]
	\centering
	\includegraphics[width=0.4\textwidth]{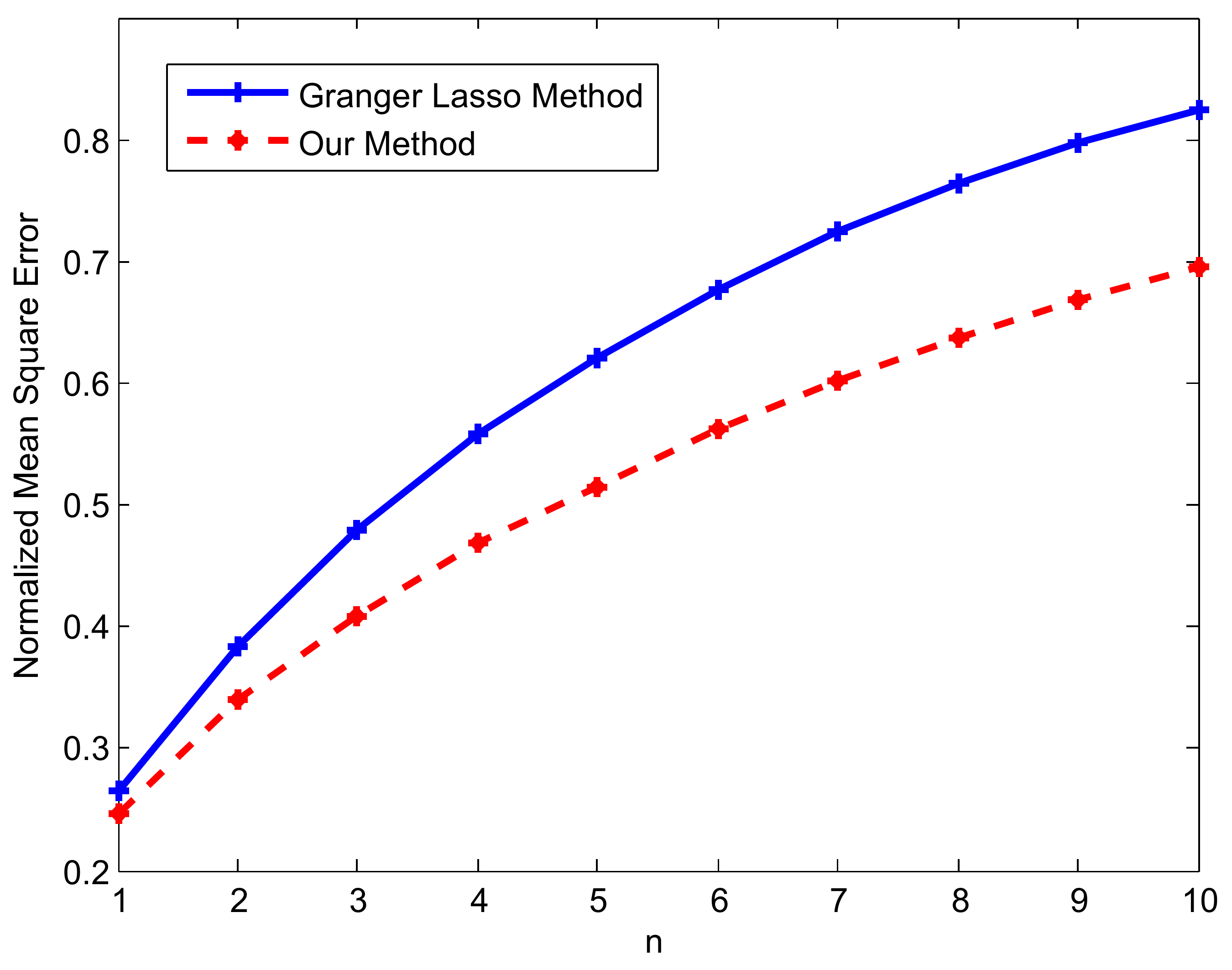}
	\caption{Stock market data results for our method and the Granger Lasso method. The results are shown for the normalize mean square error for predicting the $n$-th sample.}
	\label{fig:stock}
\end{figure}

\subsection{Climate Data} \label{sec:climate}

The Comprehensive Climate Dataset (CCDS)~\cite{climateData} is a collection of climate records of North America from~\cite{lozano2009spatial}. It contains monthly observations of climate variables spanning from $1990$ to $2002$. The observations were interpolated on a $2.5\times 2.5$ degree grid with $125$ observation locations.
The variables include Carbon-Dioxide, Methane, Carbon-Monoxide, Hydrogen, Wet Days, Cloud Cover, Vapor, Precipitation, Frost Days, Temperature, Temperature Range, Minimum Temperature, Maximum Temperature, Solar Radiation (Global Horizontal, Direct Normal, Global Extraterrestrial, Direct Extraterrestrial), and Ultraviolet (UV) radiation. In the following analysis, we omitted the UV variable because of significant missing data.

Similar to the stock market data, since the ground truth is not known, we use prediction accuracy as a measure of how well we can learn the underlying model. We train the model on data from the first $10$ years and do the prediction for the final three years. At each time, the prediction is done for the value of variables for the next month. The data are normalized across the variables and for different locations to have zero mean and unit variance. We use $10$-fold cross-validation to tune the parameters.

We set $D = dI$ in our model, where $d$ is also obtained from cross validation. Figure~\ref{fig:climate} shows the normalized mean square error for different values of the maximum delay $L$. Note that the maximum delay corresponds to $\theta_{\max}+1$ in our model. It can be seen that our method outperforms the Granger Lasso method in terms of the prediction accuracy. Also, for our method, $\theta_{\max}=3$ results in the smallest error, while for the Granger Lasso method, $L=1$ produces the best results. Thus, unlike the Granger model, our model suggests that the climate variables may depend on the value of other variables from several months ago, which is in accordance with the results of~\cite{lozano2009spatial}.

Moreover, notice that the matrix $D=dI$ captures the inter-sampling correlation and allows the latent variables to model the change occurred at time $t$. For example, in stock market data, the daily values are highly correlated, and we obtain $d=0.8$; whereas for climate data, we obtain $d=0$, which indicates that the monthly values of variables significantly change.

\begin{figure}[!t]
	\centering
	\includegraphics[width=0.4\textwidth]{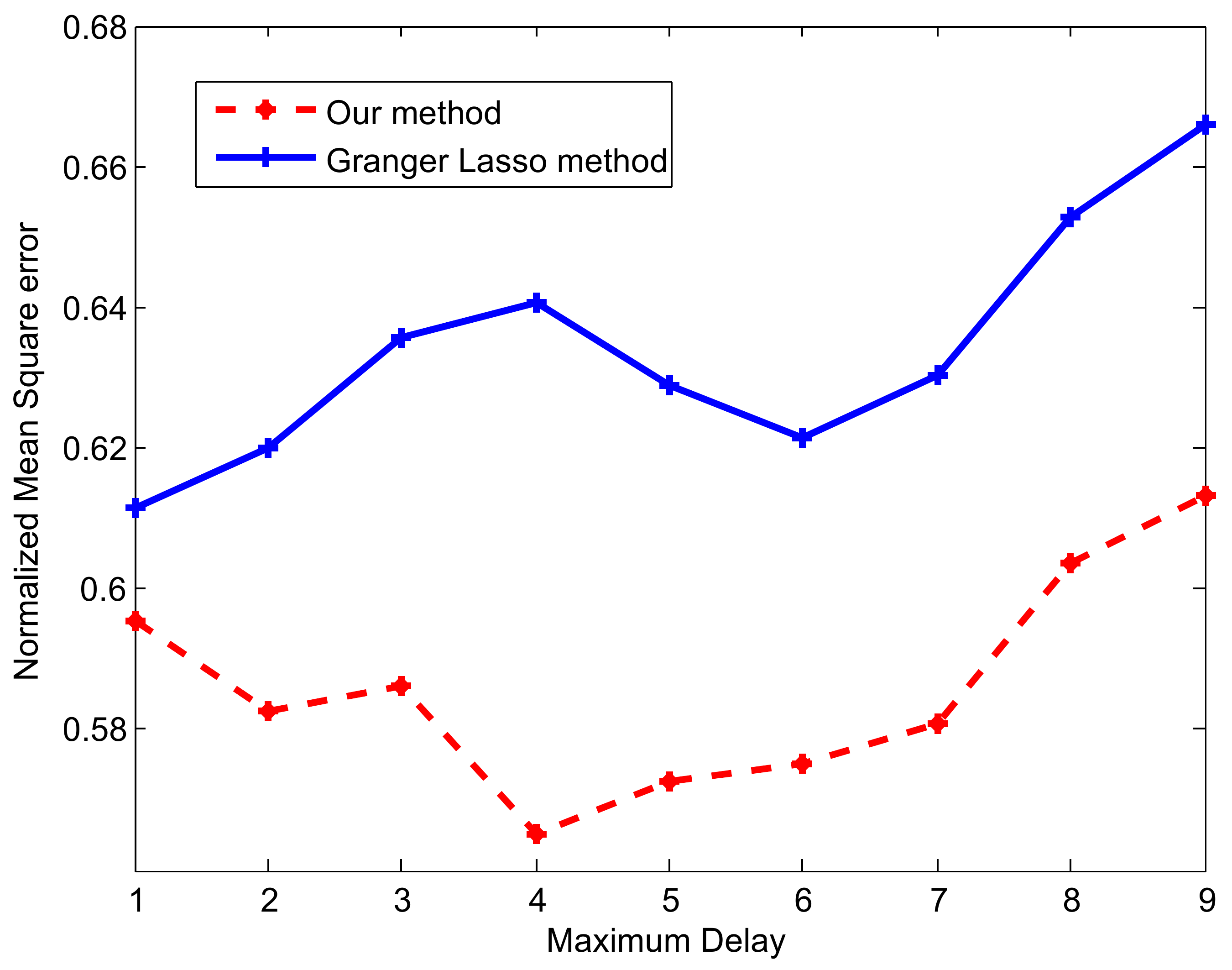}
	\caption{Climate data results for our method and the Granger Lasso method. The results are shown for the normalize mean square error for different values of the maximum delay.}
	\label{fig:climate}
\end{figure}

\section{Conclusion} 
In this paper, we presented a method for learning the underlying dependency graph of time series, using a new model for temporal latent variables. We proposed an algorithm for learning the parameters and demonstrated its consistency. This study opens up several possible directions for future study. 

\begin{itemize}
\item  While we demonstrated consistency, more analysis is required in order to obtain sample complexity bounds for learning the structure and parameters of the model. Similarly, optimal predictors and missing value imputation algorithms need to be studied under this new model, which is especially interesting for the case when $\theta_{\max}$ is large.

\item We focused on temporal latent variables that simplify and capture temporal interactions more succinctly, while several existing works have focused on spatial latent variables, that mainly capture interactions between multiple variables \cite{jalali2011learning,geiger2015causal}. An interesting direction of future study is to combine these two models to obtain a single general model that can capture both. 

\item While we focused on the real-valued time-series model, binary and categorical time series are of equal practical interest. Developing algorithms under simple canonical models for these settings is of immediate interest. 

\item In this paper, we have worked on linear models; however, the memory of {\em stable} linear dynamical systems decays exponentially, with or without latent variables. While the latent variables can encode longer term linear interactions more succinctly, the model still suffers from the eventual memory decay. This is a problem well recognized in machine learning, and a class of algorithms based on Long Short-Term Memory (LSTM) \cite{hochreiter1997long} have demonstrated remarkable empirical performance. However, they lack theoretical backing, and developing this line of work to accommodate non-decaying memory can be an interesting direction of future work.

\end{itemize}

\bibliographystyle{ieeetr}
\bibliography{Main}

\appendix
\section{Proofs of Theorem and Lemmas}\label{apx:ProofThm1}

We use the following Lemma for our analysis.
\begin{lem}
	The set of zeros of any nonzero multivarite polynomial of degree $n$ has Lebesgue measure zero on $\mathbb{R}^n$.\label{lm:lebes}\cite{geiger2015causal}
\end{lem}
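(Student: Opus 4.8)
The plan is to prove the statement by induction on the number of variables $n$, reducing the $n$-dimensional measure computation to a one-dimensional count of roots on slices via Tonelli's theorem. Throughout, let $\lambda_n$ denote Lebesgue measure on $\mathbb{R}^n$ and write $Z(P) = \{x \in \mathbb{R}^n : P(x) = 0\}$ for the zero set. Since $P$ is continuous, $Z(P)$ is closed and hence Borel measurable, so the integration tools below apply without measurability subtleties. For the base case $n = 1$, a nonzero univariate polynomial of degree $d$ has at most $d$ distinct roots, so $Z(P)$ is finite and $\lambda_1(Z(P)) = 0$.

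For the inductive step, assuming the claim for $n-1$ variables, I would write $P$ as a polynomial in the last coordinate with coefficients that are polynomials in $x' = (x_1,\dots,x_{n-1})$, say $P(x', x_n) = \sum_{k=0}^{d} a_k(x')\, x_n^k$. Since $P \not\equiv 0$, at least one coefficient is a nonzero polynomial; let $a_{k^*}$ be the one of largest such index $k^*$. The key observation is that whenever $a_{k^*}(x') \neq 0$, the univariate map $x_n \mapsto P(x', x_n)$ is a nonzero polynomial of degree $k^*$, so its root set has at most $k^*$ points and thus $\lambda_1(\{x_n : P(x', x_n) = 0\}) = 0$. The only values of $x'$ for which this can fail lie in $E = \{x' : a_{k^*}(x') = 0\}$, which is the zero set of a nonzero polynomial in $n-1$ variables, and hence $\lambda_{n-1}(E) = 0$ by the inductive hypothesis.

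I would then apply Tonelli's theorem to the indicator $\mathbf{1}_{Z(P)}$ to obtain $\lambda_n(Z(P)) = \int_{\mathbb{R}^{n-1}} \lambda_1(\{x_n : P(x', x_n) = 0\})\, dx'$. By the observation above, the integrand vanishes for every $x' \notin E$, and $E$ has $\lambda_{n-1}$-measure zero, so the integrand is zero almost everywhere and the integral vanishes, giving $\lambda_n(Z(P)) = 0$.

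The step I expect to be the main obstacle is the bookkeeping around the degenerate slices over $E$, where the inner one-dimensional measure need not vanish (the slice polynomial could be identically zero). These slices contribute nothing to the integral precisely because $E$ is $\lambda_{n-1}$-null, which is exactly what the inductive hypothesis supplies. Isolating the leading-coefficient vanishing locus $E$ and recognizing it as a lower-dimensional instance of the same statement is the crux that makes the induction close; the remaining verifications (measurability of $Z(P)$ and the applicability of Tonelli) are routine.
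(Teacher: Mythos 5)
Your proof is correct, but there is nothing in the paper to compare it against: the paper does not prove this lemma at all, it simply states it as a known result and cites \cite{geiger2015causal} for it. Your argument is the standard, self-contained proof of this fact: induction on the dimension, with the base case counting roots of a univariate polynomial, and the inductive step splitting $\mathbb{R}^{n-1}$ into the vanishing locus $E$ of the top nonzero coefficient (null by the inductive hypothesis) and its complement, where every slice polynomial is nonzero of fixed degree and hence has a finite root set. Your handling of the degenerate slices is the right one --- even though $\lambda_1$ of a slice over $E$ can be infinite (the slice polynomial may vanish identically), the inner integral is taken over a $\lambda_{n-1}$-null set, so Tonelli still yields $\lambda_n(Z(P)) = 0$. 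One cosmetic remark: the lemma as stated conflates the degree of the polynomial with the number of variables (``degree $n$ \ldots on $\mathbb{R}^n$''); your proof covers the general case of arbitrary degree in $n$ variables, which is what the paper's appendix actually uses when it argues that $\det(M)$, a rational function of the system parameters, vanishes only on a measure-zero set.
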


We also use the equality $\mathsf{vec}(S_1RS_2)=(S_2^T  \otimes S_1)\mathsf{vec}(R)$ \cite{petersen2008matrix}, where $\mathsf{vec}$ is the vector-operator which stacks the columns of a matrix into a vector, $\otimes$ is the  Kronecker product, and $S_1$, $S_2$ and $R$ are square matrices with the same size. We denote $\mathsf{vec}(\Sigma_{(\cdot)})$ by $\mathsf{V}_{(\cdot)}$.


In the following, we prove Theorem~\ref{thm:param}.
 
\begin{lem}\label{lm:VV}
	For every $i \leq 0$, each element of $\Sigma_{VX_{i}}$ is a  multivariate polynomial in the elements of $A$, $B$, $D$, $\Sigma_V$ and the pmf $q$.
\end{lem}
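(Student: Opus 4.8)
The plan is to derive a pair of coupled backward recursions for the noise cross-covariances and then induct on the lag. Alongside $\Sigma_{VX_i}=\mathbb{E}[v_t x_{t-i}^T]$, introduce the companion quantity $\Sigma_{VZ_m}=\mathbb{E}[v_t z_{t-m}^T]$, and prove the polynomial property for both simultaneously. Two elementary facts supply the terminal values that make the recursion well-founded: (i) since $v_t$ is zero-mean and independent of everything strictly in its past, $\Sigma_{VX_i}=0$ for all $i\geq 1$ and $\Sigma_{VZ_m}=0$ for all $m\geq 1$; and (ii) since the noise is temporally white, $\mathbb{E}[v_t v_{t-m}^T]=\Sigma_V$ when $m=0$ and $0$ otherwise.

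Next I would substitute the model \eqref{Eq:Sys2} to express each quantity at a given lag in terms of quantities at strictly larger lag. Expanding $z_{t-m}$ through the state equation gives $\Sigma_{VZ_m}=\Sigma_{VZ_{m+1}}A^T+\Sigma_{VX_{m+1}}B^T+[m=0]\,\Sigma_V$. For the observation, substituting $x_{t-i}=z_{(t-i)-\Theta_{t-i}}+Dx_{t-i-1}+w_{t-i}$ and conditioning on the coordinate-wise delays exactly as in the first Lemma yields, entrywise, $\Sigma_{VX_i}=\sum_{\theta=0}^{\theta_{\max}} q_\theta\,\Sigma_{VZ_{i+\theta}}+\Sigma_{VX_{i+1}}D^T$ for $i\leq 0$, where the $w$-term drops because $w_{t-i}$ (current or future noise) is independent of $v_t$. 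Crucially, both recursions reference only indices strictly larger than the one being defined, and the parameters $A$, $B$, $D$, $\Sigma_V$, $q$ enter multiplicatively.

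The induction is then on $n=-i\geq 0$, with $\Sigma_{VZ_{-n}}$ handled before $\Sigma_{VX_{-n}}$ at each level. The base case is the vanishing of all cross-covariances at positive index from (i). For the step at index $-n$, the $Z$-recursion expresses $\Sigma_{VZ_{-n}}$ through $\Sigma_{VZ_{-n+1}}$ and $\Sigma_{VX_{-n+1}}$, which are polynomials by the inductive hypothesis; then the $X$-recursion expresses $\Sigma_{VX_{-n}}$ through the just-established $\Sigma_{VZ_{-n}}$, the higher-index terms $\Sigma_{VZ_{-n+\theta}}$ for $\theta\geq 1$, and $\Sigma_{VX_{-n+1}}$. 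Since polynomials in the parameters are closed under addition and multiplication, and since for any fixed target lag only finitely many indices (those in $[-n,\theta_{\max}]$) are ever invoked, each $\Sigma_{VX_i}$ with $i\leq 0$ is a finite multivariate polynomial in the entries of $A$, $B$, $D$, $\Sigma_V$ and $q$, which is the claim.

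I expect the main obstacle to be the careful bookkeeping of the random, coordinate-wise delay $\Theta_{t-i}$ in the observation recursion, in particular justifying the entrywise conditioning $\mathbb{E}[v_t(k)\,z_{(t-i)-\Theta_{t-i}(l)}(l)]=\sum_\theta q_\theta\,\Sigma_{VZ_{i+\theta}}(k,l)$ from the independence of $\Theta$ from everything else, together with confirming that the terminal conditions in (i) render the downward recursion finite-depth so that the polynomial-closure argument genuinely terminates.
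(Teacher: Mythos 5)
Your proposal is correct and follows essentially the same route as the paper: a pair of coupled backward recursions for $\Sigma_{VX}$ and $\Sigma_{VZ}$ obtained by substituting the model equations, with the terminal conditions $\Sigma_{VX_i}=\Sigma_{VZ_i}=0$ for $i\geq 1$ and the base values $\Sigma_{VZ_0}=\Sigma_V$, $\Sigma_{VX_0}=q_0\Sigma_V$ making the induction terminate. The only cosmetic difference is that the paper substitutes the state equation once more into the observation recursion (so the $\Sigma_{VV_{\Theta-i}}$ term appears explicitly), whereas you keep $\Sigma_{VZ_{i}}$ at the same index and resolve it by ordering the induction with the $Z$-quantity first at each level; both arrangements are valid.
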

\begin{proof}
	Using the equations of the model~\eqref{Eq:Sys2}, we can write
	\begin{align}
	&\Resize{8cm}{
	\Sigma_{VX_{-i}}=A\Sigma_{VZ_{\Theta+1-i}}+B\Sigma_{VX_{\Theta+1-i}}+D\Sigma_{VX_{1-i}} + \Sigma_{VV_{\Theta-i}}, i \geq 1,}\label{Eq:eq1-vx}\\
	&\Sigma_{VZ_{-i}}=A\Sigma_{VZ_{1-i}}+B\Sigma_{VX_{1-i}}, i \geq 1.\label{Eq:eq1-vz}
	\end{align}
	In Equations~\ref{Eq:eq1-vx} and \ref{Eq:eq1-vz}, the subscripts of the right-hand sides are greater then the subscripts of the left-hand sides. Moreover, note that, for $\Theta+1-i > 0$, we have $\Sigma_{VZ_{\Theta+1-i}}=\Sigma_{VX_{\Theta+1-i}}=0$; therefore, the subscripts are never positive. As a result, for every $i\geq 1$, by recursive use of Equations~\ref{Eq:eq1-vx} and \ref{Eq:eq1-vz}, each element of $\Sigma_{VX_{-i}}$ can be returned as a multivariate polynomial in the elements of $\Sigma_{VX_{0}}$, $\Sigma_{VZ_{0}}$, $\Sigma_V$, $A$, $B$, $D$ and the pmf $q$.
	Given that $\Sigma_{VX_{0}}=q_0\Sigma_{V}$ and $\Sigma_{VZ_{0}}=\Sigma_{V}$, we conclude that for every $i\geq 0$, each element of $\Sigma_{VX_{-i}}$ is a  multivariate polynomial in the elements of $\Sigma_V$, $A$, $B$,  $D$ and the pmf $q$.
\end{proof}


\begin{lem}\label{lm:VX}
	For every $i$, each element of $\Sigma_{X_i}$ can be expressed as a rational function, where both the nominator and the denominator are multivariate polynomials in elements of matrices $A$ and $B$, $D$, $\Sigma_{W}$, $\Sigma_{V}$ and pmf $q$.
\end{lem}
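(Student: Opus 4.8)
The plan is to exhibit the entire stationary covariance structure as the solution of one finite linear system whose coefficients and right-hand side are polynomials in the parameters, and then invoke Cramer's rule to read off each covariance as a ratio of two such polynomials. First I would multiply each line of \eqref{Eq:Sys2} on the right by $x_{t-i}^T$ and take expectations, using the mixture identity $\Sigma_{UY_\Theta}=\sum_{\theta}q_\theta\Sigma_{UY_\theta}$ to eliminate the random lag. This yields, for every lag $i$, the coupled relations
$$\Sigma_{ZX_i}=A\Sigma_{ZX_{i-1}}+B\Sigma_{X_{i-1}}+\Sigma_{VX_i},\qquad \Sigma_{X_i}=\sum_{\theta=0}^{\theta_{\max}}q_\theta\Sigma_{ZX_{i-\theta}}+D\Sigma_{X_{i-1}}+\Sigma_{WX_i},$$
where $\Sigma_{VX_i}$ and $\Sigma_{WX_i}$ vanish for $i\ge 1$, equal $q_0\Sigma_V$ and $\Sigma_W$ respectively at $i=0$, and are polynomials in $A,B,D,q,\Sigma_V,\Sigma_W$ for $i\le 0$ by Lemma~\ref{lm:VV} and its analogue for $w$. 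Combining the two relations reproduces the finite-order recursion \eqref{Eq:opt1}, whose matrix coefficients $A+D$, $AD$ and $q_\theta B$ are polynomial in the parameters and which holds for all $i\ge\max(\theta_{\max},1)$.

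Next I would close these relations into a \emph{finite} system. The recursion \eqref{Eq:opt1} expresses every large-lag $\Sigma_{X_i}$ as a fixed polynomial-coefficient combination of the finitely many boundary blocks $\Sigma_{X_i}$ and $\Sigma_{ZX_i}$ with $|i|\le\theta_{\max}+1$; substituting these back into the base relations at the small lags produces a closed system $M\mathsf{v}=\mathsf{b}$. Here $\mathsf{v}$ stacks the $\mathsf{vec}$'s of the boundary covariances, $M$ is assembled from $A,B,D,q$ by means of $\mathsf{vec}(S_1RS_2)=(S_2^T\otimes S_1)\mathsf{vec}(R)$ and therefore has polynomial entries, and $\mathsf{b}$ collects the forcing terms $\Sigma_W$, $q_0\Sigma_V$ and the negative-lag $\Sigma_{VX_i},\Sigma_{WX_i}$, which are polynomial by Lemma~\ref{lm:VV}. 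Solving by Cramer's rule writes each component of $\mathsf{v}$ as a ratio of determinants of polynomial matrices, i.e.\ a rational function of the parameters; every remaining $\Sigma_{X_i}$ is then a polynomial-coefficient combination of these rationals, hence itself rational, which is the claim.

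The delicate point, and the step I expect to be the real obstacle, is guaranteeing that the denominator is not the zero polynomial, i.e.\ that $\det M\not\equiv 0$. I would establish this by recasting the same information as an augmented-state Lyapunov equation: stacking $s_t=(z_t^T,\dots,z_{t-\theta_{\max}}^T,x_t^T)^T$ turns \eqref{Eq:Sys2} into a first-order recursion $s_t=F_{\Theta_t}s_{t-1}+\eta_t$ with $(F_{\Theta_t},\eta_t)$ independent of $s_{t-1}$, so the stationary covariance $\Pi=\mathbb{E}[s_ts_t^T]$ obeys $\Pi=\sum_\theta q_\theta F_\theta\Pi F_\theta^T+Q$, equivalently $\big(I-\sum_\theta q_\theta(F_\theta\otimes F_\theta)\big)\mathsf{vec}(\Pi)=\mathsf{vec}(Q)$, with $Q$ polynomial in the parameters. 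This coefficient matrix is invertible precisely when a unique stationary covariance exists (spectral radius below one), a condition satisfied on a nonempty open set of parameters; hence its determinant is a nonzero polynomial and the rational representation is valid off a measure-zero set. All lagged blocks then follow from $\mathbb{E}[s_ts_{t-i}^T]=\big(\sum_\theta q_\theta F_\theta\big)^i\Pi$, which simultaneously organizes the negative-lag cross-covariances $\Sigma_{ZX_i}$ that the symmetry $\Sigma_{X_{-i}}=\Sigma_{X_i}^T$ does not reach, confirming rationality of $\Sigma_{X_i}$ for every $i$. Finally, Lemma~\ref{lm:lebes} certifies that the common denominator vanishes only on a set of Lebesgue measure zero, which is exactly the form needed for the subsequent generic-identifiability argument.
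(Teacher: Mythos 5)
Your proposal is correct and its core is the same as the paper's: assemble a finite linear system in the vectorized boundary covariances whose coefficient matrix and right-hand side are polynomial in $A,B,D,\Sigma_V,\Sigma_W,q$ (the paper writes this out explicitly as the seven relations in \eqref{Eq:allsigma}/\eqref{Eq:allvec}, counting $p^2(4\theta_{\max}+3)$ equations and unknowns), invoke Lemma~\ref{lm:VV} for the negative-lag noise cross-covariances, solve by Cramer's rule, and propagate to $i>\theta_{\max}$ via \eqref{Eq:opt1}. Where you genuinely add value is the last step: the paper simply asserts that the system ``can be solved,'' whereas a rational-function representation requires the determinant of the coefficient matrix to be a \emph{nonzero} polynomial, and your augmented-state Lyapunov reformulation $\bigl(I-\mathbb{E}[F_\Theta\otimes F_\Theta]\bigr)\mathsf{vec}(\Pi)=\mathsf{vec}(Q)$ supplies exactly that certificate by exhibiting a parameter point (indeed an open set) where the matrix is invertible. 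Two small cautions. First, since $\Theta_t$ is a \emph{vector} of i.i.d.\ coordinate-wise delays, the second-moment equation is $\Pi=\mathbb{E}_{\Theta}[F_{\Theta}\Pi F_{\Theta}^T]+Q$ with the expectation over all $(\theta_{\max}+1)^p$ delay combinations; this is not literally $\sum_{\theta}q_\theta F_\theta\Pi F_\theta^T$, although every entry of $\mathbb{E}[F_\Theta\otimes F_\Theta]$ is still a polynomial in the $q_\theta$'s (products of marginals), so nothing breaks. Second, invertibility of the Lyapunov system does not by itself give $\det M\neq 0$ for the \emph{other} system you assembled first---they are different matrices---but this is harmless because the Lyapunov route is self-contained: Cramer's rule on it yields $\Pi$ as a rational function, and $\mathbb{E}[s_ts_{t-i}^T]=\bigl(\mathbb{E}[F_\Theta]\bigr)^i\Pi$ then delivers every $\Sigma_{X_i}$ as a polynomial combination of rational functions. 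You could streamline the write-up by presenting only the Lyapunov argument.
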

\begin{proof}
	We have the following equations
	\begin{align}\label{Eq:allsigma}
	\Resize{7.9cm}{\begin{cases}
	\Sigma_Z = A\Sigma_ZA^T + A\Sigma_{ZX}B^T + B\Sigma_{XZ}A^T + B\Sigma_X B^T + \Sigma_V \\
	\Sigma_{Z_i}=A\Sigma_{Z_{i-1}}+B\Sigma_{XZ_{i-1}}, 1 \leq i \leq \theta_{\max}\\
	\Sigma_{ZX_i}=A\Sigma_{ZX_{i-1}}+B\Sigma_{X_{i-1}}, 1 \leq i \leq \theta_{\max}\\
	\Sigma_{ZX_{-i}}=A\Sigma_{ZX_{-i-1}}+B\Sigma_{X_{-i-1}}+\Sigma_{VX_{-i}}, 0 \leq i \leq \theta_{\max}-1\\
	\Sigma_{X}=\sum_{\theta=0}^{\theta_{\max}}q_{\theta}\Sigma_{ZX_{\theta}} + D\Sigma_{X_{-1}} + \Sigma_W \\
	\Sigma_{X_i}=\sum_{\theta=0}^{\theta_{\max}}q_{\theta}\Sigma_{ZX_{i-\theta}} + D\Sigma_{X_{i-1}}, 1 \leq i \leq \theta_{\max} \\
	\Sigma_{ZX}=\sum_{\theta=0}^{\theta_{\max}}q_{\theta}\Sigma_{Z_{\theta}} + \Sigma_{ZX_1}D^T
	\end{cases}}
	\end{align}
	where the first four equations are derived from the first equation of the model~\eqref{Eq:Sys2} and  the last three equations are derived from the second equation of the model. Note that, for every $i$, $\Sigma_{X_{-i}}=\Sigma_{X_{i}}^T$. By converting the equations in~\ref{Eq:allsigma} to vector from, we have
	\begin{align}\label{Eq:allvec}
	\Resize{7.9cm}{\begin{cases}
	(I - A\otimes A)\mathsf{V}_Z = (B\otimes A)\mathsf{V}_{ZX} + (A\otimes B)\mathsf{V}_{XZ} + (B\otimes B)\mathsf{V}_X + \mathsf{V}_V \\ \mathsf{V}_{Z_i}=(I\otimes A)\mathsf{V}_{Z_{i-1}}+(I\otimes B)\mathsf{V}_{XZ_{i-1}}, 1 \leq i \leq \theta_{\max}\\
	\mathsf{V}_{ZX_i}=(I\otimes A)\mathsf{V}_{ZX_{i-1}}+(I\otimes B)\mathsf{V}_{X_{i-1}}, 1 \leq i \leq \theta_{\max}\\
	\mathsf{V}_{ZX_{-i}}=(I\otimes A)\mathsf{V}_{ZX_{-i-1}}+(I\otimes B)\mathsf{V}_{X_{-i-1}}+\mathsf{V}_{VX_{-i}}, 0 \leq i \leq \theta_{\max}-1\\
	\mathsf{V}_{X}=\sum_{\theta=0}^{\theta_{\max}}q_{\theta}\mathsf{V}_{ZX_{\theta}} + (I\otimes D)\mathsf{V}_{X_{-1}} + \mathsf{V}_W \\
	\mathsf{V}_{X_i}=\sum_{\theta=0}^{\theta_{\max}}q_{\theta}\mathsf{V}_{ZX_{i-\theta}} + (I\otimes D)\mathsf{V}_{X_{i-1}}, 1 \leq i \leq \theta_{\max} \\
	\mathsf{V}_{ZX}=\sum_{\theta=0}^{\theta_{\max}}q_{\theta}\mathsf{V}_{Z_{\theta}} + (D\otimes I)\mathsf{V}_{ZX_1}. \\
	\end{cases}}
	\end{align}

	According to Lemma~\ref{lm:VV}, each element of $\Sigma_{VX_{-i}}, i \geq 0,$ can be returned as a  multivariate polynomial in the elements of the system parameters. Also, note that since $\Sigma_{XZ}=\Sigma_{ZX}^T$, $\mathsf{V}_{XZ}$ is a permutation of $\mathsf{V}_{ZX}$.
	Therefore, Equ.~\ref{Eq:allvec} forms a linear system of equations with $p^2(4\theta_{\max}+3)$ number of equations and the same number of variables in elements of $\mathsf{V}_{X_i}, i \in [0 \,,\, \theta_{\max}] \cup \{1\},$ $\mathsf{V}_{Z_i}, i \in [0 \,,\, \theta_{\max}],$ and $\mathsf{V}_{ZX_i}, i \in [-\theta_{\max} \,,\, \theta_{\max}] \cup \{1\}$.
	As a result, each element of $\Sigma_{X_i}, 0\leq i \leq \theta_{\max},$ can be solved as a rational function, where both the nominator and the denominator are multivariate polynomials in elements of system parameters. Using Equ.~\ref{Eq:opt1}, $\Sigma_{X_i}, i > \theta_{\max},$ can be also expressed as rational functions with both the nominator and denominator as multivariate polynomials. Therefore, the proof is complete.
\end{proof}


Let $M^{(k)}$ be a block Toeplitz matrix as follows.
\begin{align}\label{Mk}
M^{(k)}=
\begin{bmatrix}
\Sigma_{X_{k+1}} & \Sigma_{X_{k+2}} & \dots & \Sigma_{X_{2k+1}} \\
\Sigma_{X_{k}} & \Sigma_{X_{k+1}} & \dots & \Sigma_{X_{2k}} \\
\vdots & \vdots & \ddots & \vdots \\
\Sigma_{X_{1}} & \Sigma_{X_{2}} & \dots & \Sigma_{X_{k+1}}
\end{bmatrix}.
\end{align}

We consider two cases of $\theta_{\max} \in \{0,1\}$ and $\theta_{\max} \geq 2$.

\noindent \textbf{Case 1:} $\bm{\theta}_{\max} \in \mathbf{\{0,1\}}$

In this case, Equ.~\ref{Eq:opt2} can be written as ${B}^{-1}\Sigma_{X_{i+1}}= (K_1+q_0I)\Sigma_{X_i} +(-K_2+q_1I)\Sigma_{X_{i-1}}, i\geq 1$, and therefore
\begin{align}\label{Eq:theta0eq}
\Resize{7.9cm}{
\Sigma_{X_{i-1}}= (-K_2+q_1I)^{-1}{B}^{-1}\Sigma_{X_{i+1}} -(-K_2+q_1I)^{-1}(K_1+q_0I)\Sigma_{X_{i}}, i\geq 1.}
\end{align}
Note that for $\theta_{\max}=0$, we have $q_0=1$ and $q_1=0$. Writing Equ.~\ref{Eq:theta0eq} For $i \in \{1,2\}$, we get
\begin{align}\label{Eq:identif0}
\Resize{7.9cm}{\begin{bmatrix}
\Sigma_{X} & \Sigma_{X_{1}}
\end{bmatrix}=
\begin{bmatrix}
(-K_2+q_1I)^{-1}{B}^{-1} & -(-K_2+q_1I)^{-1}(K_1+q_0I)
\end{bmatrix}M^{(1)}.}
\end{align}


\begin{lem}\label{lm:inst0}
	Following model~\eqref{Eq:Sys2} with $\theta_{\max} \in \{0,1\}$, there is an instance of parameters for which matrix $M^{(1)}$ defined in Equ.~\ref{Mk} is full-rank.
\end{lem}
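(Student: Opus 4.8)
The plan is to prove existence constructively: exhibit one explicit parameter instance and verify that $M^{(1)}$ is nonsingular there, rather than argue abstractly. By Lemma~\ref{lm:VX}, every entry of each $\Sigma_{X_i}$, and hence $\det M^{(1)}$, is a rational function of the parameters $A,B,D,\Sigma_V,\Sigma_W,q$. Consequently it suffices to produce a single point of parameter space at which this determinant is nonzero, i.e. to show that $\det M^{(1)}$ is not the identically-zero function. (This non-vanishing is precisely what the measure-zero argument of Lemma~\ref{lm:lebes} will invoke downstream in the proof of Theorem~\ref{thm:param}.)

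First I would decouple the coordinates. Choosing $A,B,D,\Sigma_V,\Sigma_W$ all diagonal makes the $p$ coordinates of~\eqref{Eq:Sys2} evolve as $p$ mutually independent scalar subsystems, so each $\Sigma_{X_i}$ is diagonal. A permutation of rows and columns then brings $M^{(1)}$ into block-diagonal form with $p$ blocks of size $2\times 2$, the $j$-th being $\left[\begin{smallmatrix}\Sigma_{X_2}(j,j) & \Sigma_{X_3}(j,j)\\ \Sigma_{X_1}(j,j) & \Sigma_{X_2}(j,j)\end{smallmatrix}\right]$. Hence $\det M^{(1)}$ is the product of the $2\times 2$ block determinants, and it is enough to find scalar parameters $a,b,d,\sigma_v^2,\sigma_w^2$ for which the single scalar relation $r_2^2\neq r_1 r_3$ holds, where $r_i$ denotes the scalar autocovariance $\Sigma_{X_i}$ at lag $i$.

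For the scalar reduction (take $\theta_{\max}=0$, so $q_0=1$), eliminating $z_t$ from~\eqref{Eq:Sys2} yields the ARMA$(2,1)$ recursion $x_t=(a+b+d)x_{t-1}-ad\,x_{t-2}+\eta_t$ with $\eta_t=v_t+w_t-a w_{t-1}$. I would then fix convenient stable values and compute $r_1,r_2,r_3$ directly from this recursion (via the Yule--Walker relations together with the innovation cross-terms) and verify $r_2^2\neq r_1 r_3$. The one genuine obstacle --- and the reason a careless choice fails --- is that the naive selection $a=d=0$ collapses the latent dynamics to a pure AR$(1)$ process, whose autocovariances are geometric ($r_k=b^{|k|}r_0$) and therefore satisfy $r_2^2=r_1 r_3$ exactly, making every $2\times 2$ block singular. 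The remedy is to retain genuine second-order memory by taking $a\neq 0$, so the latent variable contributes a nontrivial second pole that breaks the geometric structure and produces $r_2^2\neq r_1 r_3$; confirming this inequality for the concrete ARMA$(2,1)$ autocovariances is the only substantive computation in the argument, and it immediately gives $\det M^{(1)}=(r_2^2-r_1r_3)^p\neq 0$.
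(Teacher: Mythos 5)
Your strategy is the same one the paper uses: choose all system matrices diagonal so the coordinates decouple, reduce $M^{(1)}$ (after a row/column permutation) to $p$ identical $2\times 2$ blocks, and verify the scalar condition $r_2^2\neq r_1r_3$ for one explicit choice of scalar parameters; the paper's instance is $A=D=\tfrac{I}{2}$, $B=0$, $\Sigma_V=\Sigma_W=I$, for which it computes $r_1=\tfrac{82}{27}$, $r_2=\tfrac{53}{27}$, $r_3=\tfrac{65}{54}$ and the block determinant $\tfrac{144}{729}\neq 0$. Your diagnosis that $a=d=0$ fails is correct, but your stated remedy --- ``take $a\neq 0$'' --- is not sufficient, and this is the one genuine gap. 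From your own ARMA$(2,1)$ reduction $x_t=(a+b+d)x_{t-1}-ad\,x_{t-2}+\eta_t$ with $\eta_t$ having MA memory $1$, the Yule--Walker relations give $r_k=\phi_1 r_{k-1}+\phi_2 r_{k-2}$ for $k\geq 2$ with $\phi_2=-ad$, whence
\begin{equation*}
r_2^2-r_1r_3=\phi_2\left(r_0r_2-r_1^2\right)=-ad\left(r_0r_2-r_1^2\right).
\end{equation*}
So the block determinant vanishes identically whenever $ad=0$: with $a\neq 0$ but $d=0$ the process is still an ARMA$(1,1)$ (AR coefficient $a+b$), and $r_2^2=r_1r_3$ exactly. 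You need \emph{both} $a\neq 0$ and $d\neq 0$ (as in the paper's choice $a=d=\tfrac12$), plus a check that $r_0r_2\neq r_1^2$, which is where the MA innovation term $-aw_{t-1}$ does the work. Since you explicitly defer the ``only substantive computation,'' and the condition you propose to compute under can land on a singular instance, you should fix the instance to one with $ad\neq 0$ and carry the autocovariance computation through before the argument is complete.
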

\begin{proof}
	Let $A=D=\frac{I}{2}$, $B=0$, and $q_0=1$ and $q_1=0$. Using the model equations, we obtain $\Sigma_{X}=\frac{80}{27}\Sigma_V + \frac{4}{3}\Sigma_W$, $\Sigma_{X_1}=\frac{64}{27}\Sigma_V + \frac{2}{3}\Sigma_W$, $\Sigma_{X_2}=\frac{44}{27}\Sigma_V + \frac{1}{3}\Sigma_W$ and $\Sigma_{X_3}=\frac{28}{27}\Sigma_V + \frac{1}{6}\Sigma_W$.
	By setting $\Sigma_V=I$ and $\Sigma_W=I$, the corresponding instance of $M^{(1)}$ is full-rank.
\end{proof}


\begin{thm}\label{thm:fullrank0}
	Following model~\eqref{Eq:Sys2} with $\theta_{\max} \in \{0,1\}$, the matrix $M^{(1)}$ defined in Equ.~\ref{Mk} is full-rank for generic system parameters $A$, $B$, $D$, $\Sigma_V$, $\Sigma_W$ and $q$.
\end{thm}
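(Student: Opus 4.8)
The plan is to run the standard algebraic-genericity argument: express $\det M^{(1)}$ as a rational function of the system parameters, show that its numerator is a \emph{nonzero} polynomial by exhibiting one parameter value where the determinant does not vanish, and then invoke Lemma~\ref{lm:lebes} to conclude that the determinant is nonzero outside a set of Lebesgue measure zero. Since generic parameters are, by definition, drawn from a distribution absolutely continuous with respect to Lebesgue measure, they avoid this null set with probability one, which is exactly the assertion.

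First I would note that for $k=1$ the matrix in~\eqref{Mk} is the $2p\times 2p$ array whose blocks are the autocovariances $\Sigma_{X_1}$, $\Sigma_{X_2}$, $\Sigma_{X_3}$, so $\det M^{(1)}$ is a polynomial in the entries of these three blocks. By Lemma~\ref{lm:VX}, every entry of each $\Sigma_{X_i}$ is a rational function of $A$, $B$, $D$, $\Sigma_V$, $\Sigma_W$ and $q$, with polynomial numerator and denominator. Substituting these expressions into the determinant and clearing denominators turns $\det M^{(1)}$ into a single rational function $N/\Delta$ of the parameters, where $N$ and $\Delta$ are multivariate polynomials. Full rank of $M^{(1)}$ is equivalent to $\det M^{(1)}\neq 0$, i.e.\ to $N\neq 0$ evaluated at parameter values where $\Delta\neq 0$.

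Next I would invoke Lemma~\ref{lm:inst0}, which supplies an explicit parameter instance ($A=D=\tfrac{I}{2}$, $B=0$, $\Sigma_V=\Sigma_W=I$, $q_0=1$) at which $M^{(1)}$ is full-rank. At this instance the system is stable, so the covariances are well defined and $\Delta\neq 0$ there; hence $\det M^{(1)}=N/\Delta\neq 0$ forces $N\neq 0$ there as well. Consequently $N$ is not the identically-zero polynomial. Applying Lemma~\ref{lm:lebes}, the zero set $\{N=0\}$ has Lebesgue measure zero, and likewise $\{\Delta=0\}$ is a null set (being the zero set of a polynomial that is nonzero at the instance). Thus outside the measure-zero set $\{N=0\}\cup\{\Delta=0\}$ we have $\det M^{(1)}=N/\Delta\neq 0$, so $M^{(1)}$ is full-rank, completing the argument.

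The main obstacle I anticipate is the bookkeeping in the first reduction: one must confirm that collecting $\det M^{(1)}$ over a common denominator yields a genuine numerator $N$ that is not annihilated by cancellation against the block denominators. This is precisely the point the explicit instance of Lemma~\ref{lm:inst0} settles, since a single nonvanishing evaluation certifies $N\not\equiv 0$. A secondary point requiring care is the treatment of the pmf $q$: for $\theta_{\max}=1$ the free simplex coordinate $q_0$ must be counted among the generic parameters, and the instance value $q_0=1$ (a boundary point) still suffices to show that $N$ is not identically zero as a polynomial in all parameters, $q_0$ included, so the genericity conclusion covers both $\theta_{\max}=0$ and $\theta_{\max}=1$.
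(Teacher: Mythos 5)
Your proposal is correct and follows essentially the same route as the paper: use Lemma~\ref{lm:VX} to write $\det M^{(1)}$ as a ratio of multivariate polynomials in the parameters, use the explicit instance of Lemma~\ref{lm:inst0} to certify the numerator is not identically zero, and conclude via the measure-zero property of polynomial zero sets (Lemma~\ref{lm:lebes}). Your added care about the denominator $\Delta$ and the boundary value $q_0=1$ is a slight refinement of, not a departure from, the paper's argument.
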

\begin{proof}
	According to Lemma~\ref{lm:VX}, each element of $\Sigma_{X_i}$ can be written as a rational function, where both the nominator and the denominator are multivariate polynomials in elements of the system parameters $A$, $B$, $D$, $\Sigma_V$, $\Sigma_W$ and $q$. Therefore, we can conclude that $\det(M)=\frac{P}{Q}$, where $P$ and $Q$ are multivariate polynomials in elements of the system parameters.
	According to Lemma~\ref{lm:inst0}, these multivarite polynomials are nonzero, because there exists an instance which results in a nonzero value for $\det(M)$. Since the Lebesgue measure of roots of a nonzero multivariate polynomial is zero, we conclude that $\det(M)$ is nonzero for generic parameters.
\end{proof}


\begin{lem}\label{lm:notfull0}
	Following model~\eqref{Eq:Sys2} with $\theta_{\max} \in \{0,1\}$, for $k \geq 2$, the matrix $M^{(k)}$ defined in Equ.~\ref{Mk} is not full-rank.
\end{lem}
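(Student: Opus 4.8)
The plan is to exploit the constant-coefficient recursion that the autocovariances obey once $\theta_{\max}\le 1$, and to show that it forces the $k+1$ block rows of $M^{(k)}$ to lie in the span of only two of them. First I would specialize Equ.~\ref{Eq:opt1} to $\theta_{\max}\in\{0,1\}$. By the first Lemma of Section~\ref{apx:analysis} we have $\Sigma_{X_{i-\Theta}}=q_0\Sigma_{X_i}+q_1\Sigma_{X_{i-1}}$ (with $q_1=0$ when $\theta_{\max}=0$), so substituting into Equ.~\ref{Eq:opt1} collapses the delay term and yields a second-order recursion with \emph{constant} matrix coefficients,
\begin{equation}
\Sigma_{X_{i+1}}=P_1\,\Sigma_{X_i}+P_2\,\Sigma_{X_{i-1}},\qquad i\ge 1,
\end{equation}
where $P_1=A+q_0B+D$ and $P_2=q_1B-AD$ are exactly the quantities shown identifiable in Theorem~\ref{thm:param}. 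The essential point is that this relation is valid for \emph{every} $i\ge 1$, hence for every autocovariance index occurring in $M^{(k)}$, whose smallest index is $1$.

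Next I would read off the block rows of $M^{(k)}$. Writing the $(r,c)$ block as $\Sigma_{X_{k+1-r+c}}$ for $r,c\in\{1,\dots,k+1\}$, the $r$-th block row $\rho_r$ consists of $\Sigma_{X_{k+2-r}},\dots,\Sigma_{X_{2k+2-r}}$. Applying the recursion entrywise along a row, I expect the matrix-coefficient identity
\begin{equation}
\rho_r=P_1\,\rho_{r+1}+P_2\,\rho_{r+2},\qquad r=1,\dots,k-1 .
\end{equation}
I would verify the index bookkeeping: the entry in block column $c$ asks for the recursion at $i=k-r+c$, and for $r\le k-1$ one has $i\ge 2\ge 1$, so every entry of block rows $1,\dots,k-1$ genuinely satisfies the recursion, and the rows $\rho_{r+1},\rho_{r+2}$ are still present in the matrix (since $r+2\le k+1$).

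The key consequence is a rank bound. By downward induction from $r=k-1$ to $r=1$, the identity expresses each $\rho_r$ through $\rho_{r+1}$ and $\rho_{r+2}$, so every block row lies in the span of the $2p$ scalar rows contained in $\rho_k$ and $\rho_{k+1}$; hence $\mathrm{rank}\,M^{(k)}\le 2p$. Since $M^{(k)}$ is a square matrix of size $(k+1)p$ and $(k+1)p\ge 3p>2p$ whenever $k\ge 2$, it cannot have full rank, which is the claim. Note that this argument uses only the algebraic recursion and no genericity assumption.

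The one delicate point—and the step I would write out most carefully—is the passage from the matrix-coefficient recursion to genuine scalar linear dependence among the rows: because $P_1,P_2$ act on the \emph{left}, the correct reading is that the $i$-th scalar row of $\rho_r$ equals $\sum_j (P_1)_{ij}\,(\text{$j$-th scalar row of }\rho_{r+1})+\sum_j (P_2)_{ij}\,(\text{$j$-th scalar row of }\rho_{r+2})$, which is an ordinary scalar combination of the rows of $\rho_{r+1}$ and $\rho_{r+2}$. Once this is made explicit the induction and the bound $\mathrm{rank}\,M^{(k)}\le 2p$ are immediate.
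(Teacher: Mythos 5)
Your proof is correct and uses essentially the same argument as the paper: specialize Equ.~\ref{Eq:opt1} to a constant-coefficient second-order recursion $\Sigma_{X_{i+1}}=(A+q_0B+D)\Sigma_{X_i}+(q_1B-AD)\Sigma_{X_{i-1}}$ and observe that left-multiplication by these matrices turns the block-row identity into genuine scalar linear dependence among the rows of $M^{(k)}$. The paper only records that the first block row depends on the second and third; your chained induction giving $\mathrm{rank}\,M^{(k)}\le 2p$ is a slightly stronger conclusion obtained by the same mechanism.
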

\begin{proof}
	According to Equ.~\ref{Eq:theta0eq}, $\Sigma_{X_i}, i\geq 3,$ can be written in terms of $\Sigma_{X_{i-1}}$ and $\Sigma_{X_{i-2}}$. Therefore, for $k \geq 2$, the first row of $M^{(k)}$ is a linear function of other rows, and, as a result, the matrix is not full-rank.
\end{proof}


\noindent \textbf{Case 2:} $\bm{\theta}_{\max} \geq \mathbf{2}$

By expanding Equ.~\ref{Eq:opt2}, we can write
\begin{align}\label{Eq:mat1}
\nonumber \Sigma_{X_{i-\theta_{\max}}} = &\frac{1}{q_{\theta_{\max}}}{B}^{-1}\Sigma_{X_{i+1}} \\
\nonumber &-\frac{q_0I+K_1}{q_{\theta_{\max}}}\Sigma_{X_i} - \frac{q_1I - K_2}{q_{\theta_{\max}}}\Sigma_{X_{i-1}} \\
&- \frac{1}{q_{\theta_{\max}}}\sum_{\theta=2}^{\theta_{\max}-1}q_{\theta}\Sigma_{X_{i-\theta}}, i\geq \theta_{\max}.
\end{align}
Writing Equ.~\ref{Eq:mat1} For $i \in [\theta_{\max} \, ,\, 2\theta_{\max}]$, we get
\begin{align}\label{Eq:identif}
\nonumber &\begin{bmatrix}
\Sigma_{X_{0}} & \Sigma_{X_{1}} & \dots & \Sigma_{X_{\theta_{\max}}}
\end{bmatrix}=\\
&\Resize{7.9cm}{
\begin{bmatrix}
\frac{{B}^{-1}}{q_{\theta_{\max}}} & -\frac{q_0I+K_1}{q_{\theta_{\max}}} & - \frac{q_1I - K_2}{q_{\theta_{\max}}} & -\frac{q_2 I}{q_{\theta_{\max}}} & \dots & -\frac{q_{\theta_{\max}-1}I}{q_{\theta_{\max}}}
\end{bmatrix}M^{(\theta_{\max})}.}
\end{align}


\begin{lem}\label{lm:inst}
	Following model~\eqref{Eq:Sys2} with $\theta_{\max} \geq 2$, there is an instance of parameters for which matrix $M^{(\theta_{\max})}$ defined in Equ.~\ref{Mk} is full-rank.
\end{lem}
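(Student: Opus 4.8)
The plan is to mirror the strategy of Lemma~\ref{lm:inst0}: exhibit a single convenient parameter instance and verify that the corresponding $M^{(\theta_{\max})}$ is full-rank. By Lemma~\ref{lm:VX}, every entry of each $\Sigma_{X_i}$, and hence $\det M^{(\theta_{\max})}$, is a rational function of the system parameters, so it suffices to produce one instance with $\det M^{(\theta_{\max})}\neq 0$; generic non-degeneracy then follows exactly as in the proof of Theorem~\ref{thm:fullrank0}.

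First I would decouple the $p$ coordinates by taking $A=\alpha I$, $D=\delta I$, $B=\beta I$ and $\Sigma_V=\Sigma_W=I$. With all system matrices scalar multiples of the identity and the noise white across coordinates, the $p$ scalar processes are i.i.d., so each $\Sigma_{X_i}=\sigma_i I$ for a scalar autocovariance $\sigma_i$, and $M^{(\theta_{\max})}=\tilde M\otimes I$ where $\tilde M$ is the $(\theta_{\max}+1)\times(\theta_{\max}+1)$ scalar Toeplitz matrix with $\tilde M(r,c)=\sigma_{\theta_{\max}+c-r+1}$. Since $\det(\tilde M\otimes I)=(\det\tilde M)^{p}$, the problem reduces to choosing scalar parameters for which the single matrix $\tilde M$ is nonsingular.

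The central structural observation, and the reason a random lag is essential once $\theta_{\max}\geq 2$, is that by Equ.~\ref{Eq:opt1} the scalar sequence obeys $\sigma_{i+1}=(\alpha+\delta)\sigma_i-\alpha\delta\sigma_{i-1}+\beta\sum_{\theta=0}^{\theta_{\max}}q_\theta\sigma_{i-\theta}$ for all $i\geq\theta_{\max}$. When $\beta\neq 0$ this is a homogeneous linear relation among the $\theta_{\max}+2$ consecutive values $\sigma_{i+1},\dots,\sigma_{i-\theta_{\max}}$. Any row-dependence of $\tilde M$ forced by this relation would involve $\theta_{\max}+2$ consecutive block-rows, but $\tilde M$ has only $\theta_{\max}+1$ rows, so no dependence is forced. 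This is in sharp contrast to the degenerate case $\beta=0$, where the relation collapses to $\sigma_{i+1}=(\alpha+\delta)\sigma_i-\alpha\delta\sigma_{i-1}$; then for $\theta_{\max}\geq 2$ the top three rows satisfy $\mathrm{row}_1=(\alpha+\delta)\,\mathrm{row}_2-\alpha\delta\,\mathrm{row}_3$ and $\tilde M$ is singular, which is the same short-recurrence mechanism responsible for Lemma~\ref{lm:notfull0}. I would therefore fix $\alpha=\delta=\tfrac12$ (echoing Lemma~\ref{lm:inst0}), a full-support pmf with all $q_\theta>0$, and a nonzero delay coupling $\beta\neq 0$, compute $\sigma_1,\dots,\sigma_{2\theta_{\max}+1}$ from the system equations (Equ.~\ref{Eq:allsigma}), and certify $\det\tilde M\neq 0$.

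The main obstacle is this last step: turning ``no dependence is forced'' into an actual nonsingularity proof valid for every $\theta_{\max}\geq 2$, uniformly in the (variable) size of $\tilde M$. I would attack it by viewing $\det\tilde M$ as a rational function of $\beta$ (with $\alpha,\delta,q$ fixed) via Lemma~\ref{lm:VX} and showing its numerator is not the zero polynomial. A natural route is to examine $\det\tilde M$ in a limit that isolates the delay-coupling term, for example $\alpha=\delta=0$ (so that the relation becomes $\sigma_{i+1}=\beta\sum_\theta q_\theta\sigma_{i-\theta}$) or large $\beta$, where the determinant should reduce to a triangular or Vandermonde-type quantity that is manifestly nonzero when all $q_\theta>0$. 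Establishing non-vanishing of this leading contribution is the analogue of the explicit arithmetic check at the end of Lemma~\ref{lm:inst0} and is where the real work lies; once it is in hand, $\det\tilde M\neq 0$ for some admissible $\beta$, and Lemma~\ref{lm:inst} follows.
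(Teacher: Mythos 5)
Your overall strategy is the right one and is essentially the paper's: decouple the $p$ coordinates by taking all system matrices to be scalar multiples of the identity, observe that $M^{(\theta_{\max})}$ then factors as a scalar Toeplitz matrix tensored with $I$, and exhibit one scalar instance with nonzero determinant so that genericity follows from the polynomial-zero-set argument of Theorem~\ref{thm:fullrank0}. However, there is a genuine gap at the decisive step, and you say so yourself: the argument that ``no dependence is forced'' because the recurrence from Equ.~\ref{Eq:opt1} relates $\theta_{\max}+2$ consecutive covariances while $\tilde M$ has only $\theta_{\max}+1$ rows rules out one particular source of singularity but does not establish nonsingularity --- a matrix can be singular for reasons other than a row satisfying that specific recurrence. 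The actual certification that $\det\tilde M\neq 0$, uniformly in $\theta_{\max}$, is exactly ``where the real work lies,'' and your proposal leaves it as a plan (a hoped-for triangular or Vandermonde limit) rather than carrying it out.

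The paper closes this gap with a sharper choice of instance than the ones you float: $A=D=0$, $B=bI$ with $0<b<1$, $\Sigma_W=I$, $\Sigma_V=0$, and $q$ \emph{uniform} on $[0,\theta_{\max}]$. Two features of this choice do all the work. First, $\Sigma_V=0$ (not $\Sigma_V=I$ as you propose) is important: with $A=D=0$ the process collapses to $x_t=bx_{t-\Theta-1}+w_t$, and keeping latent noise would inject terms $v_{t-\Theta_t}$ whose random time indices can collide across $t$, spoiling the clean covariance computation. Second, the uniform $q$ makes the system of covariance equations solvable in closed form with $\sigma_{X_1}=\sigma_{X_2}=\cdots=\sigma_{X_{\theta_{\max}+1}}$ all equal, while $\sigma_{X_{\theta_{\max}+2}}=b\,\sigma_{X_{\theta_{\max}+1}}$ is strictly smaller. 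Subtracting each column of $M_1$ from its neighbor then annihilates every entry below the diagonal and leaves the constant $\sigma_{X_{\theta_{\max}+1}}-\sigma_{X_{\theta_{\max}+2}}\neq 0$ on the diagonal, so $M_1$ is triangularized with nonzero determinant for every $\theta_{\max}\geq 2$. This explicit construction is the missing ingredient your proposal would need to be complete; with it supplied, the rest of your argument goes through as written.
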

\begin{proof}
	Let $A=D=0$, $B=bI, 0<b<1$, $\Sigma_V=0$, $\Sigma_W=I$, and $q$ be a uniform distribution in $[0 \, , \, \theta_{\max}]$. Using the model equations~\eqref{Eq:Sys2}, we get
	\begin{align}\label{eq:ins}
	x_t=bx_{t-\Theta-1}+w_t.
	\end{align}
	Since variables are independent with the same distribution, all matrices $\Sigma_{X_i}, i\in [1 \,, \, 2\theta_{\max}+1],$ are multiples of identity. Therefore, we can write $M^{(\theta_{\max})}=M_1 \otimes I$, where $M_1$ is the corresponding matrix for one variable, as follows
	\begin{align}
	\nonumber M_1=
	\begin{bmatrix}
	\sigma_{X_{\theta_{\max}+1}} & \sigma_{X_{\theta_{\max}+2}} & \dots & \sigma_{X_{2\theta_{\max}+1}} \\
	\sigma_{X_{\theta_{\max}}} & \sigma_{X_{\theta_{\max}+1}} & \dots & \sigma_{X_{2\theta_{\max}}} \\
	\vdots & \vdots & \ddots & \vdots \\
	\sigma_{X_{1}} & \sigma_{X_{2}} & \dots & \sigma_{X_{\theta_{\max}+1}}
	\end{bmatrix},
	\end{align}
	where $\sigma$ denotes the variance. Since $\mathrm{rank}(M^{(\theta_{\max})})=p \; \mathrm{rank}(M_1)$, we only need to show that matrix $M_1$ is full-rank.
	Using Equ.~\ref{eq:ins}, we can write
	\begin{align}\label{eq:ins2}
	\begin{cases}
	\sigma_{X}=bq_0\sum_{j=0}^{\theta_{\max}}\sigma_{X_{j+1}}+1 \\
	\sigma_{X_i}=bq_0\sum_{j=0}^{\theta_{\max}}\sigma_{X_{j+1-i}}, 1 \leq i \leq \theta_{\max}+1.
	\end{cases}
	\end{align}
	where $q_0=\frac{1}{\theta_{\max}+1}$. By solving the system of equations in \ref{eq:ins2}, we obtain
	\begin{align}
	\nonumber\begin{cases}
	\sigma_{X}=\frac{(1-b)\theta_{\max}+1}{(1-b)\theta_{\max}+(1-b^2)} \\
	\sigma_{X_i}=\frac{b}{(1-b)\theta_{\max}+(1-b^2)}, 1 \leq i \leq \theta_{\max}+1.
	\end{cases}
	\end{align}
	Also, using the following equation
	\begin{equation}\label{eq_ins}
	\nonumber \scalebox{0.95}{
	$\begin{bmatrix}
	\sigma_{X_{0}} & \sigma_{X_{1}} & \dots & \sigma_{X_{\theta_{\max}}}
	\end{bmatrix}=
	\begin{bmatrix}
	\frac{b^{-1}}{q_0} & -1 & -1 & \dots & -1
	\end{bmatrix}M_1,$}
	\end{equation}
	we solve $\sigma_{X_{\theta_{\max}+2}}=b\sigma_{X_{\theta_{\max}+1}}$, and, since $b<1$, we have $\sigma_{X_{\theta_{\max}+2}}\neq \sigma_{X_{\theta_{\max}+1}}$. Now, we compute the determinant of $M_1$ by doing the matrix column operation. More specifically, we replace $m_i$  with $m_i-m_{i+1},1 \leq i \leq \theta_{\max},$ where $m_i$ is the $i$-th column of $M_1$. The new matrix is an upper triangular matrix, as follows
	\begin{equation}\scalebox{0.61}{
	\nonumber $M'_1=
	\begin{bmatrix}
	\sigma_{X_{\theta_{\max}+1}}-\sigma_{X_{\theta_{\max}+2}} & \sigma_{X_{\theta_{\max}+2}}-\sigma_{X_{\theta_{\max}+3}} & \dots & \sigma_{X_{2\theta_{\max}}}-\sigma_{X_{2\theta_{\max}+1}} & \sigma_{X_{2\theta_{\max}+1}} \\
	0 & \sigma_{X_{\theta_{\max}+1}}-\sigma_{X_{\theta_{\max}+2}} & \dots & \sigma_{X_{2\theta_{\max}-1}}-\sigma_{X_{2\theta_{\max}}} & \sigma_{X_{2\theta_{\max}}} \\
	\vdots & \vdots & \ddots & \vdots & \vdots \\
	0 & 0 & \dots & \sigma_{X_{\theta_{\max}+1}}-\sigma_{X_{\theta_{\max}+2}} & \sigma_{X_{\theta_{\max}+2}} \\
	0 & 0 & \dots & 0 & \sigma_{X_{\theta_{\max}+1}}
	\end{bmatrix}.$}
	\end{equation}
	Since all diagonals are nonzero, the determinant of $M'_1$ is nonzero, and, as a result, matrix $M'_1$ and also $M_1$ are full-rank.
\end{proof}


\begin{thm}\label{thm:fullrank}
	Following model~\eqref{Eq:Sys2} with $\theta_{\max} \geq 2$, the matrix $M^{(\theta_{\max})}$ defined in Equ.~\ref{Mk} is full-rank for generic system parameters $A$, $B$, $D$, $\Sigma_V$, $\Sigma_W$ and $q$.
\end{thm}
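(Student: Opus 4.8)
The plan is to follow the same genericity argument used in the proof of Theorem~\ref{thm:fullrank0}, substituting the witness instance of Lemma~\ref{lm:inst} for that of Lemma~\ref{lm:inst0}. First I would note that $M^{(\theta_{\max})}$ is a square matrix of dimension $(\theta_{\max}+1)p \times (\theta_{\max}+1)p$, so establishing full rank is equivalent to showing that $\det(M^{(\theta_{\max})}) \neq 0$. The structural input is Lemma~\ref{lm:VX}: each entry of every block $\Sigma_{X_i}$ occurring in $M^{(\theta_{\max})}$ (with lags ranging up to $2\theta_{\max}+1$) is a rational function of the system parameters $A$, $B$, $D$, $\Sigma_V$, $\Sigma_W$ and $q$, whose numerator and denominator are multivariate polynomials. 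Since the determinant is a polynomial in these entries, it is itself a rational function $\det(M^{(\theta_{\max})}) = P/Q$ with $P$ and $Q$ multivariate polynomials in the system parameters.

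The crux is to certify that the numerator $P$ is not the zero polynomial. This is exactly the content of Lemma~\ref{lm:inst}, which produces an explicit parameter choice ($A=D=0$, $B=bI$ with $0<b<1$, $\Sigma_V=0$, $\Sigma_W=I$, and uniform $q$) at which $M^{(\theta_{\max})}$ is shown to be full-rank via the upper-triangular column reduction of $M_1$. A nonzero determinant at this single point shows that $P$ does not vanish identically. I would then invoke Lemma~\ref{lm:lebes}: the zero set of a nonzero multivariate polynomial has Lebesgue measure zero. Because the generic parameters are, by definition, drawn independently from distributions absolutely continuous with respect to Lebesgue measure, the event $\{P=0\}$ has probability zero, so $\det(M^{(\theta_{\max})}) \neq 0$ almost surely and $M^{(\theta_{\max})}$ is full-rank for generic parameters.

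I do not expect a genuine obstacle, since all the difficult work has been front-loaded into Lemma~\ref{lm:inst}; the remaining steps are the same measure-theoretic skeleton used for the $\theta_{\max}\in\{0,1\}$ case. The two points requiring a moment of care are: first, confirming that Lemma~\ref{lm:VX} indeed supplies the rational representation for \emph{all} the lags appearing in $M^{(\theta_{\max})}$, including $i > \theta_{\max}$, which it does by extending through Equ.~\ref{Eq:opt1}; and second, noting that although the witness instance sets $\Sigma_V=0$ on the boundary of the parameter space, this is harmless, since we only need $P$ to be nonzero at a single point to conclude that it is a nonzero polynomial, after which the conclusion propagates to generic (interior) parameters.
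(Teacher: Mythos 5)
Your proposal matches the paper's argument exactly: the paper proves Theorem~\ref{thm:fullrank} by the same route as Theorem~\ref{thm:fullrank0}, writing $\det(M^{(\theta_{\max})})$ as a ratio of multivariate polynomials via Lemma~\ref{lm:VX}, certifying the numerator is nonzero via the explicit witness of Lemma~\ref{lm:inst}, and concluding by the measure-zero argument of Lemma~\ref{lm:lebes}. Your added remarks (coverage of lags $i>\theta_{\max}$ through Equ.~\ref{Eq:opt1}, and the harmlessness of the boundary witness $\Sigma_V=0$) are correct and slightly more careful than the paper's one-line proof.
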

\begin{proof}
	Proof is similar to the proof of Theorem~\ref{thm:fullrank0}.
\end{proof}


\begin{lem}\label{lm:notfull}
	Following model~\eqref{Eq:Sys2} with $\theta_{\max} \geq 2$, the matrix $M^{(k)}$ defined in Equ.~\ref{Mk} is not full-rank, for $k \geq \theta_{\max}+1$.
\end{lem}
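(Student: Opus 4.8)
The plan is to exhibit an explicit linear dependence among the block rows of $M^{(k)}$, exactly as in the proof of Lemma~\ref{lm:notfull0}, but now accounting for the longer memory created by $\theta_{\max}\geq 2$. Combining Equation~\ref{Eq:opt1} with the expansion $\Sigma_{X_{i-\Theta}}=\sum_{\theta=0}^{\theta_{\max}}q_{\theta}\Sigma_{X_{i-\theta}}$ and collecting coefficients, for every $i\geq\theta_{\max}$ one obtains a left recurrence
\begin{align}
\nonumber \Sigma_{X_{i+1}}=\sum_{j=0}^{\theta_{\max}}c_j\,\Sigma_{X_{i-j}},
\end{align}
with $p\times p$ matrix coefficients $c_0=(A+D)+q_0B$, $c_1=q_1B-AD$, and $c_{\theta}=q_{\theta}B$ for $2\leq\theta\leq\theta_{\max}$. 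Thus $\Sigma_{X_{i+1}}$ is determined by the preceding $\theta_{\max}+1$ covariance blocks.

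First I would read the block structure of $M^{(k)}$ off Equation~\ref{Mk}: in block row $r$ and block column $c$ (indexed from the top left, $1\leq r,c\leq k+1$) the entry is $\Sigma_{X_{k+1-r+c}}$. I would then apply the recurrence to each entry $\Sigma_{X_{k+c}}$ of the first block row, taking $i+1=k+c$ for $c=1,\dots,k+1$; since $k\geq\theta_{\max}+1$ we have $i=k+c-1\geq\theta_{\max}$, so the recurrence is legitimate for every column. This expresses $\Sigma_{X_{k+c}}$ through $\Sigma_{X_{k+c-1}},\dots,\Sigma_{X_{k+c-1-\theta_{\max}}}$, and a short index check identifies $\Sigma_{X_{k+c-1-j}}$ as precisely the entry in block column $c$ of block row $j+2$. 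Because the coefficients $c_0,\dots,c_{\theta_{\max}}$ do not depend on $c$, this shows that the first block row equals the left-linear combination $\sum_{j=0}^{\theta_{\max}}c_j\cdot(\text{block row }j+2)$ of block rows $2,3,\dots,\theta_{\max}+2$.

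The hypothesis enters exactly here: block row $\theta_{\max}+2$ lies inside $M^{(k)}$ if and only if $\theta_{\max}+2\leq k+1$, i.e.\ $k\geq\theta_{\max}+1$, which is the assumed range. To conclude, I would package the dependence as a block row vector $L=[\,I\ \ {-}c_0\ \ {-}c_1\ \ \cdots\ \ {-}c_{\theta_{\max}}\ \ 0\ \ \cdots\ \ 0\,]$ of size $p\times p(k+1)$ and check that $LM^{(k)}=0$. Since $L$ carries the identity block $I$ in its first position it has rank $p$, so the left null space of $M^{(k)}$ has dimension at least $p$ and $\mathrm{rank}(M^{(k)})\leq pk<p(k+1)$; hence $M^{(k)}$ is not full-rank.

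The only genuinely delicate point is the noncommutative, matrix-coefficient character of the dependence. Unlike the scalar relation used in Lemma~\ref{lm:notfull0}, here the $c_j$ multiply the blocks on the left, so I must verify both that a single coefficient sequence works simultaneously for all $k+1$ columns (it does, the recurrence being column-independent) and that left-multiplication of the block rows genuinely produces a rank-$p$ family of left null vectors rather than a merely formal block identity. The remaining work is index bookkeeping, which the bound $k\geq\theta_{\max}+1$ makes come out exactly.
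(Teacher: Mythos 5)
Your proposal is correct and follows the same route as the paper: the paper's proof consists of the single observation that, by the recurrence of Equation~\ref{Eq:opt2}, $\Sigma_{X_i}$ for $i\geq\theta_{\max}+2$ is a (left) linear combination of $\Sigma_{X_{i-1}},\dots,\Sigma_{X_{i-1-\theta_{\max}}}$, so the first block row of $M^{(k)}$ depends linearly on block rows $2$ through $\theta_{\max}+2$, which all fit inside $M^{(k)}$ precisely when $k\geq\theta_{\max}+1$. You have simply written out the coefficients, the index bookkeeping, and the rank argument that the paper leaves implicit.
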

\begin{proof}
	According to Equ.~\ref{Eq:opt2}, $\Sigma_{X_i}, i\geq \theta_{\max}+2,$ can be written in terms of
	$j \in [i-1-\theta_{\max} \, , \, i-1]$. Therefore, for $k \geq \theta_{\max}+1$, the first row of $M^{(k)}$ is a linear function of other rows, and, as a result, the matrix is not full-rank.
\end{proof}


For $\theta_{\max} \in \{0,1\}$, according to Theorem~\ref{thm:fullrank0}, $M^{(1)}$ is full-rank for generic parameters, and according to Lemma~\ref{lm:notfull0}, $M^{(k)}$ is not full-rank for $k > 1$.
Also, for $\theta_{\max} \geq 2$, according to Theorem~\ref{thm:fullrank}, $M^{(\theta_{\max})}$ is full-rank for generic parameters, and according to Lemma~\ref{lm:notfull}, $M^{(k)}$ is not full-rank for $k > \theta_{\max}$. Therefore, we have the following Corollary.
\begin{cor}\label{cor:theta}
Let $M^{(k)}$ be as defined in Equ.~\ref{Mk}. The maximum $k$ for which $M^{(k)}$ is full-rank is $\max(\theta_{\max}, 1)$.
\end{cor}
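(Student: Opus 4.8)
The plan is to prove the corollary by assembling the four results already established across the two parameter regimes, combining the ``full-rank for generic parameters'' statements (which pin down the threshold from below) with the ``never full-rank'' statements (which pin it down from above). First I would split into the two cases that have already been treated, namely $\theta_{\max}\in\{0,1\}$ and $\theta_{\max}\geq 2$, observing that these correspond exactly to $\max(\theta_{\max},1)=1$ and $\max(\theta_{\max},1)=\theta_{\max}$, respectively, so that proving the claim in each case proves it in full.

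For $\theta_{\max}\in\{0,1\}$, Theorem~\ref{thm:fullrank0} gives that $M^{(1)}$ is full-rank for generic parameters, so the threshold is at least $1$, while Lemma~\ref{lm:notfull0} gives that $M^{(k)}$ is rank-deficient for every $k\geq 2$, so the threshold is at most $1$; hence the maximal $k$ equals $1=\max(\theta_{\max},1)$. For $\theta_{\max}\geq 2$, Theorem~\ref{thm:fullrank} gives full rank of $M^{(\theta_{\max})}$ generically, and Lemma~\ref{lm:notfull} gives rank-deficiency of $M^{(k)}$ for all $k\geq\theta_{\max}+1$; hence the maximal $k$ equals $\theta_{\max}=\max(\theta_{\max},1)$. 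Combining the two cases yields the stated value $\max(\theta_{\max},1)$.

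The only point I would take care to articulate, rather than a genuine obstacle, is the asymmetry between the two ingredients. The full-rank claims hold only for \emph{generic} parameters, i.e.\ off a Lebesgue-null set, via Lemma~\ref{lm:lebes} together with the explicit non-degenerate instances constructed in Lemmas~\ref{lm:inst0} and~\ref{lm:inst}; by contrast, the rank-deficiency claims hold for \emph{all} parameters, being consequences of the exact linear recurrences among the $\Sigma_{X_i}$ in which the top block-row of $M^{(k)}$ becomes a linear combination of the lower block-rows once $k$ exceeds the threshold. The ``maximum'' in the corollary is thus understood in the generic sense: on the full-measure set where $M^{(\max(\theta_{\max},1))}$ attains full rank, every $M^{(k)}$ with larger $k$ is simultaneously rank-deficient, so $\max(\theta_{\max},1)$ is indeed the largest index at which full rank occurs. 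Since the substantive work, namely the polynomial/genericity argument behind the full-rank theorems, is already in place, I expect the corollary to follow by this direct case analysis with no further difficulty.
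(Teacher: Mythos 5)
Your proposal is correct and follows essentially the same route as the paper: the authors likewise obtain the corollary by combining Theorem~\ref{thm:fullrank0} with Lemma~\ref{lm:notfull0} in the case $\theta_{\max}\in\{0,1\}$ and Theorem~\ref{thm:fullrank} with Lemma~\ref{lm:notfull} in the case $\theta_{\max}\geq 2$. Your added remark that the lower bound holds only generically while the rank-deficiency holds for all parameters is a point the paper leaves implicit, and is worth stating.
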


Now, we are ready to prove Theorem~\ref{thm:param}. Recall that $K_1 = {B}^{-1}(A+D)$ and $K_2={B}^{-1}AD$. We have the following cases.
\begin{itemize}
	\item $\theta_{\max}\in \{0,1\}$: 
	According to Corollary~\ref{cor:theta}, we can identify that $\theta_{\max} \in \{0,1\}$ and thus we can form Equ.~\ref{Eq:identif0}. Since $M^{(1)}$ is full rank, using this equation,
	we can identify the matrices $(-K_2+q_1I)^{-1}{B}^{-1}$ and $(-K_2+q_1I)^{-1}(K_1+q_0I)$. This is equivalent to identifying  $A+q_0B+D$ and $q_1B-AD$. Note that, in this case, we cannot determine $\theta_{\max}$. Also, note that for $\theta_{\max}=0$, we have $q_0=1$ and $q_1=0$.

	\item $\theta_{\max}\geq 2$: According to Corollary~\ref{cor:theta}, we can identify $\theta_{\max}$ and thus we can form Equ.~\ref{Eq:identif}. Since $M^{(\theta_{\max})}$ is full rank, using this equation, 
	we can identify ${B}^{-1}/q_{\theta_{\max}}$, $(q_0I+K_1)/q_{\theta_{\max}}$, $(q_1I - K_2)/q_{\theta_{\max}}$ and $q_i/q_{\theta_{\max}} , i \in [2 , \theta_{\max}-1]$.
	This is equivalent to identifying $B$ up to a scalar multiple, $A+q_0B+D$ and $q_1B-AD$.
\end{itemize}


\end{document}